\pdfoutput=1
\documentclass{article}

\usepackage[preprint]{neurips_2026}
\setcitestyle{round,semicolon}

\usepackage[T1]{fontenc}
\usepackage{microtype}
\usepackage{amsmath}
\usepackage{amssymb}
\usepackage{amsthm}
\usepackage{graphicx}
\usepackage{url}
\usepackage{booktabs}
\usepackage{xcolor}
\usepackage[algo2e,ruled]{algorithm2e}
\usepackage{hyperref}
\hypersetup{colorlinks,
            linkcolor=blue,
            citecolor=blue,
            urlcolor=magenta,
            pdftitle={On the Geometry of Multimodal Saturation: Riemannian VICReg},
            pdfauthor={Nessim Ben Abbes, Duc Han Le, Sabri Mtibaa, Van-Tam Nguyen}}

\newtheoremstyle{headplain}{}{}{\itshape}{}{\bfseries}{}{\newline}{\thmname{#1}\thmnumber{ #2}\thmnote{ (#3)}}
\newtheoremstyle{headdefinition}{}{}{\normalfont}{}{\bfseries}{}{\newline}{\thmname{#1}\thmnumber{ #2}\thmnote{ (#3)}}
\theoremstyle{headplain}
\newtheorem{theorem}{Theorem}
\newtheorem{lemma}[theorem]{Lemma}
\newtheorem{proposition}[theorem]{Proposition}
\newtheorem{corollary}[theorem]{Corollary}
\theoremstyle{headdefinition}
\newtheorem{definition}[theorem]{Definition}

\newenvironment{keywords}
  {\begin{quote}\small\textbf{Keywords:}\ \ignorespaces}
  {\end{quote}}


\newcommand{\rvicreg}{R-VICReg}
\newcommand{\vicreg}{VICReg}
\newcommand{\simsiam}{SimSiam}
\newcommand{\Tri}{(I,T,\mathrm{Tab})}
\newcommand{\Bstar}{B^{*}}
\newcommand{\TriGain}{\Delta_{\mathrm{tri}}}
\newcommand{\TriGainOf}[1]{\Delta_{\mathrm{tri}}^{\text{#1}}}
\newcommand{\Npack}{\mathcal{N}_{\mathrm{pack}}}

\DeclareMathOperator{\FM}{FM}
\DeclareMathOperator{\diam}{diam}
\newcommand{\norm}[1]{\left\lVert #1\right\rVert}
\newcommand{\ip}[2]{\langle #1,#2\rangle}

\title{On the Geometry of Multimodal Saturation:\\Riemannian VICReg}

\author{Nessim Ben Abbes\\
  Protectline\\
  \texttt{nessim.benabbes@protectline.fr}
  \And
  Duc Han Le\\
  T\'el\'ecom Paris\\
  \texttt{duc-han.le@telecom-paris.fr}
  \And
  Sabri Mtibaa\\
  Protectline\\
  \texttt{sabri.mtibaa@protectline.fr}
  \And
  Van-Tam Nguyen\\
  T\'el\'ecom Paris\\
  \texttt{van-tam.nguyen@telecom-paris.fr}}

\begin{document}

\maketitle

\begin{abstract}
In self-supervised learning, a third modality should improve, or at least
preserve, performance. Across nine image--text--tabular datasets, we show that
it instead harms performance: the trimodal model underperforms its own best
bimodal subset in $55.6\%$ of paired runs under \vicreg. The same
failure occurs in $51.1\%$ of paired runs under \simsiam. We call this failure
multimodal saturation. We propose that the failure lies in the alignment
geometry. Riemannian \vicreg\ (\rvicreg) generalizes classical \vicreg: it
aligns views by squared geodesic distance on learnable negative-curvature
product factors and recovers \vicreg\ exactly as curvature vanishes. Over the
same $45$ paired runs, \rvicreg\ raises the probability that the third modality
helps from $44.4\%$ to $64.4\%$, with gains concentrated where
\vicreg\ saturates.
\end{abstract}

\begin{keywords}
multimodal self-supervised learning, hyperbolic geometry, product manifolds,
readout capacity, packing bounds, VICReg, representational geometry
\end{keywords}

\section{Introduction}
\label{sec:intro}

Multimodal self-supervised learning (SSL) inherits a monotonicity expectation
from the scaling literature: an additional modality should improve, or at worst
preserve, representation
quality~\citep{baltrusaitis2019survey,girdhar2023imagebind}.
The expectation is reasonable: a trimodal encoder observes strictly more than
any bimodal subset. Yet in masked-view non-contrastive multimodal SSL, after
identical pretraining and finetuning, the trimodal model underperforms its own
best bimodal subset in $55.6\%$ of paired runs under
\vicreg~\citep{bardes2022vicreg} and $51.1\%$ under
\simsiam~\citep{chen2021simsiam}. Figure~\ref{fig:datasets}
reports the per-dataset rates behind these
aggregates. We call this failure multimodal saturation. Related
non-monotonic effects are usually attributed to optimization: gradient
imbalance, dominant modalities, fusion
schedules~\citep{wu2022greedy,magal2025negative,chaudhuri2025mmcollapse,
xu2025balancebenchmark}.

The optimization account is useful but incomplete. It does not identify which
component of the objective carries the failure. We propose that the missing
variable is the metric inside the invariance term.

In non-contrastive SSL, the invariance term contracts embeddings of two views of
the same input toward each
other~\citep{grill2020byol,zbontar2021barlow,wang2020alignment}. Under heavy
masking or modality dropout, partial observations can lose the evidence that
distinguishes two semantic
states~\citep{he2022mae,ebrahimi2023lanistr,ma2022missingmodality}. The metric
for this contraction also measures downstream separability. The invariance
metric sets alignment tightness and readout capacity. Simply put, it
determines how many semantic states a downstream head can distinguish at its
margin.

In Euclidean space, readout capacity grows only polynomially with radius. When
demand exceeds that capacity, distinct states collide at the margin scale
(Figure~\ref{fig:prime}). The added modality becomes redundant at the
representation level while remaining informative at the input level. We name
this mechanism induced redundancy: the alignment geometry manufactures
redundancy that the data did not contain.

\begin{figure}[tb]
\centering
\includegraphics[width=\linewidth]{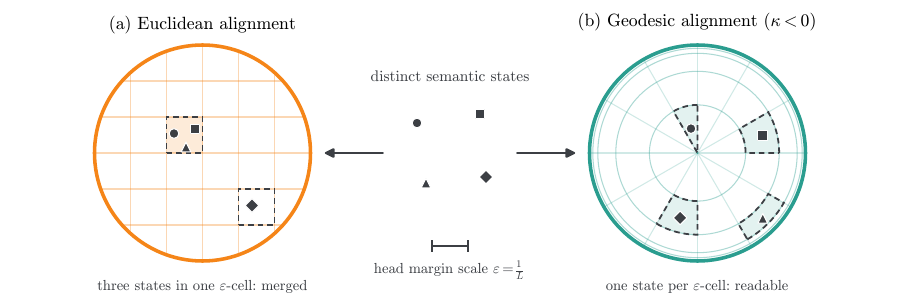}
\caption{The alignment geometry decides whether aligned states stay readable.
Masked-view alignment contracts each semantic state (one marker shape per state)
inside the same radius budget, and a downstream head resolves states only at its
margin scale $\varepsilon=\frac{1}{L}$. (a)~Under Euclidean alignment, several
states land in one $\varepsilon$-cell and merge: induced redundancy.
(b)~Under geodesic alignment at $\kappa<0$, the same budget offers exponentially
more $\varepsilon$-cells, so each state keeps its own.
Lemmas~\ref{lem:euclid-packing}--\ref{lem:hyperbolic-upper} prove the two growth
laws.}
\label{fig:prime}
\end{figure}

The diagnosis suggests a targeted intervention rather than a new objective.
Constant negative curvature buys exponential volume growth in radius and
therefore exponential readout capacity (Corollary~\ref{cor:bits}; proofs in
Appendix~\ref{app:prelim}). We hold the pipeline fixed (encoders, fusion,
masking, projector dimension, regularization weights, finetuning) and replace
only the metric of the invariance term.

Riemannian \vicreg\ (\rvicreg) generalizes \vicreg. It maps the projector
output onto a product of low-dimensional constant negative-curvature factors
with learnable curvature, aligns views by squared product-geodesic distance, and
computes VICReg-style variance and covariance penalties in tangent coordinates
at batch Fr\'echet means. The full objective has an exact VICReg limit
(Proposition~\ref{prop:euclid-limit}). Curvature is one continuous, learnable
geometric degree of freedom, not an architectural commitment.

While \rvicreg\ is not a drop-in replacement for \vicreg, we find that its gains
concentrate where the Euclidean objective is capacity-limited. That selectivity
is the falsifiable footprint we test.

\paragraph{Related work.}
\label{sec:related}
Joint-embedding SSL without negatives first avoided collapse through
architectural asymmetry (BYOL, \simsiam)~\citep{grill2020byol,chen2021simsiam},
then through variance--covariance constraints (Barlow Twins,
\vicreg)~\citep{zbontar2021barlow,bardes2022vicreg}, and analyses of
alignment--uniformity, spectra, and information content
followed~\citep{wang2020alignment,mialon2022vcreg,balestriero2022contrastive,
shwartzZiv2023information}.
Masked modeling extended the recipe from language and
vision~\citep{devlin2018bert,he2022mae,xie2022simmim} to multimodal
encoders~\citep{geng2022m3ae,bachmann2022multimae} and to the LANISTR-style
image--text--tabular scaffold we adopt~\citep{ebrahimi2023lanistr}; as
modalities accumulated, non-monotonic gains appeared and were attributed to
optimization: missing-modality sensitivity, collapse, greedy reliance, negative
co-learning~\citep{ma2022missingmodality,chaudhuri2025mmcollapse,wu2022greedy,
magal2025negative}.
In parallel, hyperbolic representation learning grew from hierarchy
embedding~\citep{nickel2017poincare,ganea2018hyperbolicnn} to product and
mixed-curvature spaces with learnable per-factor
curvature~\citep{gu2019mixedcurvature,skopek2020mixedcurvature,
bachmann2020ccgcn},
entered contrastive and cross-modal
SSL~\citep{ge2023hcl,desai2023hyperbolic,franco2023hysp,durrant2023hmsn}, and
reached geometry-adaptive \vicreg\ variants that kernelize the objective or
penalize curvature~\citep{sepanj2025kernelvicreg,ghojogh2025curvssl}. The two
lines have not met at the invariance term: multimodal responses change balancing
schedules, fusion, or regularizers and keep the alignment metric Euclidean,
while the hyperbolic line targets hierarchy rather than readout capacity at the
downstream margin. That capacity vocabulary comes from metric entropy and
large-margin classification in metric
spaces~\citep{kolmogorov1961epsilon,vonluxburg2004distance,
gottlieb2014efficient}.

\paragraph{Contributions.}
We give (i) multimodal saturation, documented over nine datasets and two
Euclidean objectives; (ii) a proposed account, induced redundancy,
formalized by a readout-capacity theorem (Theorem~\ref{thm:capacity})
placing readable concepts between $\Npack(B,\frac{1}{L}+2\delta)$ and
$\Npack(B,\frac{1}{L})$, so the excess above capacity counts forced merges;
(iii) \rvicreg, a generalization of \vicreg\ with geodesic invariance on
a learnable-curvature product manifold, Fr\'echet-centered tangent
regularization, and an exact \vicreg\ limit; and (iv) experimental results
that report what the metric change does to multimodal performance.

\begin{figure}[tb]
\centering
\includegraphics[width=\linewidth]{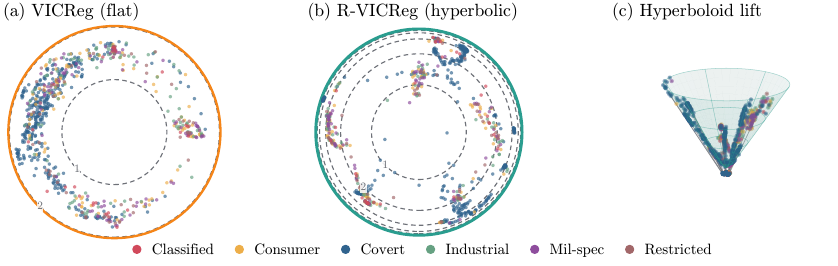}
\caption{Three views render the learned metric of one trimodal CSGO run (seed
456). (a) In the flat \vicreg\ projector, equal-distance rings stay evenly
spaced. (b) In the most-utilized \rvicreg\ factor ($\kappa_{32}=-0.988$), the
rings at the same geodesic distances crowd toward the boundary. (c) The same
points sit on the hyperboloid above their Poincar\'e-ball shadow (construction
in Appendix~\ref{app:geometry}).}
\label{fig:geometry-showcase}
\end{figure}

\section{Saturation as a Readout-Capacity Deficit}
\label{sec:theory}

We read multimodal saturation as a margin-scale failure: after finetuning,
distinct semantic states are no longer separated enough for a downstream head to
discriminate them robustly. The invariance metric pulls views of one state
together and measures how far different states remain; both roles enter the
capacity count.

Throughout, $(\mathcal M,d)$ is the representation space. We suppose codes lie
in a closed ball $B(o,R)$; normalization and weight decay motivate this radius
budget. $\Npack$ denotes packing numbers (Definition~\ref{def:packing}). A
semantic state is realized as a set of codes, and the invariance term shrinks
that set's diameter.

\begin{definition}[Aligned concepts and readability]
\label{def:family}
Let $K\in\mathbb N$ and $\delta\ge0$. A $\delta$-aligned $K$-concept family in
$B(o,R)$ is a collection of nonempty sets $C_1,\dots,C_K\subseteq B(o,R)$ with
$\diam(C_i)\le\delta$ for every $i\in\{1,\dots,K\}$.\\
The family is $(L,1)$-readable if there exist $L$-Lipschitz functions
$h_1,\dots,h_K:\mathcal M\to\mathbb R$ such that, for all $i\in\{1,\dots,K\}$
and all $c\in C_i$,
\[
h_i(c)\;\ge\;\max_{j\in\{1,\dots,K\}\setminus\{i\}}h_j(c)+1.
\]
\end{definition}

\begin{theorem}[Readout capacity]
\label{thm:capacity}
Fix $o\in\mathcal M$, $R>0$, $L>0$, and $\delta\ge0$.
\begin{enumerate}
\item[(i)] Obstruction: if a $\delta$-aligned $K$-concept family in $B(o,R)$
is $(L,1)$-readable, then
\[
K\;\le\;\Npack\big(B(o,R),\tfrac{1}{L};d\big).
\]
\item[(ii)] Guarantee: if $s_1,\dots,s_K\in B(o,R)$ satisfy
$d(s_i,s_j)\ge\tfrac{1}{L}+2\delta$ for all $i\neq j$, then every
$\delta$-aligned $K$-concept family with $s_i\in C_i$ for every
$i\in\{1,\dots,K\}$ is $(L,1)$-readable.
\end{enumerate}
\end{theorem}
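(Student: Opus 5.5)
The plan is to prove both parts by elementary metric arguments. For (i), one representative per concept suffices: the readability margin forces representatives of different concepts to be $\frac{1}{L}$-separated. For (ii), distance-to-concept functions serve as explicit readout heads.

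\emph{Obstruction (i).} Choose any $c_i\in C_i$ for each $i$; the sets are nonempty. Fix $i\neq j$. Readability at $c_i$ gives $h_i(c_i)-h_j(c_i)\ge 1$, and readability at $c_j$ gives $h_j(c_j)-h_i(c_j)\ge 1$. Adding the two inequalities and regrouping yields
\[
\big(h_i(c_i)-h_i(c_j)\big)+\big(h_j(c_j)-h_j(c_i)\big)\;\ge\;2 .
\]
Each bracket is at most $L\,d(c_i,c_j)$ by the $L$-Lipschitz property, so $d(c_i,c_j)\ge\frac{1}{L}$. In particular the $c_i$ are pairwise distinct. Thus $\{c_1,\dots,c_K\}\subseteq B(o,R)$ is a $\frac{1}{L}$-separated set of cardinality $K$, and $K\le\Npack(B(o,R),\frac{1}{L};d)$ by Definition~\ref{def:packing}. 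This step does not use $\delta$ at all.

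\emph{Guarantee (ii).} Define $h_i(x)=-L\,d(x,C_i)$, where $d(x,C_i)=\inf_{y\in C_i}d(x,y)$. Distance to a nonempty set is $1$-Lipschitz, by the triangle inequality and taking the infimum, so each $h_i$ is $L$-Lipschitz. Let $c\in C_i$. Then $h_i(c)=0$, so it suffices to show $d(c,C_j)\ge\frac{1}{L}$ for every $j\neq i$. For any $y\in C_j$, the triangle inequality and the diameter bounds $d(c,s_i)\le\delta$, $d(y,s_j)\le\delta$ (valid because $s_i\in C_i$ and $s_j\in C_j$) give
\[
d(c,y)\;\ge\;d(s_i,s_j)-d(c,s_i)-d(y,s_j)\;\ge\;\tfrac{1}{L}+2\delta-2\delta=\tfrac{1}{L}.
\]
Taking the infimum over $y\in C_j$ gives $h_j(c)\le -1=h_i(c)-1$, which is exactly $(L,1)$-readability.

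\emph{Expected obstacle.} The only delicate point is matching the separation convention in Definition~\ref{def:packing}: pairwise distance $\ge\varepsilon$ versus $>\varepsilon$. Part (i) produces the non-strict bound $d(c_i,c_j)\ge\frac{1}{L}$. If the packing definition demands strict separation, I would try the following repair: apply the monotonicity of $\Npack$ in the radius and argue at scale $\frac{1}{L}-\eta$ for every $\eta>0$. The alternative is to observe that the definition is stated with the non-strict inequality, in which case no repair is needed. Everything else reduces to the triangle inequality.
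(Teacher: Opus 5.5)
Your proof is correct. Part (i) is the paper's argument: the paper bundles your two brackets into $g=h_i-h_j$, which is $2L$-Lipschitz with $g(c)\ge 1$ and $g(c')\le -1$, and then takes one representative per concept as you do.

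Part (ii) uses a different witness. The paper builds clamped bumps around the anchors, $h_i(x)=\min\{1,\max\{0,1-L(d(x,s_i)-\delta)\}\}$, with plateau radius $\delta$ and support radius $\delta+\frac{1}{L}$. It applies the triangle inequality once, $d(c,s_j)\ge\frac{1}{L}+\delta$, and lets the plateau absorb the second $\delta$. You instead take $h_i=-L\,d(\cdot,C_i)$ and spend the diameter bound at both ends to get $d(c,C_j)\ge\frac{1}{L}$.

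The two choices buy different things. The paper's heads depend only on $(s_1,\dots,s_K,\delta)$ and take values in $[0,1]$, so one fixed readout certifies every $\delta$-aligned family through the anchors at once; yours are rebuilt for each family. Your choice buys a sharper statement. It shows that any family of nonempty sets with $d(c,c')\ge\frac{1}{L}$ for all $c\in C_i$, $c'\in C_j$, $i\ne j$, is $(L,1)$-readable. Since the argument in (i) works for arbitrary $c\in C_i$, $c'\in C_j$, readability is \emph{equivalent} to pairwise set separation at scale $\frac{1}{L}$, with (ii) the special case in which anchors and diameters force that separation.

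Your closing worry is moot, because Definition~\ref{def:packing} uses $\ge\varepsilon$. The repair you sketch would not have worked anyway: monotonicity bounds $\Npack$ at scale $\frac{1}{L}$ by its value at $\frac{1}{L}-\eta$, not conversely. In fact, under a strict convention the statement is false. In $B(0,\frac{1}{2L})\subset\mathbb R$, the singletons $C_1=\{-\frac{1}{2L}\}$ and $C_2=\{\frac{1}{2L}\}$ are read with unit margin by $h_1(x)=-Lx$ and $h_2(x)=Lx$. Yet no two points of that ball are at distance strictly greater than $\frac{1}{L}$.
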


The proof is in Appendix~\ref{app:prelim}.

Exceeding the packing capacity forces merges. If $K$ concepts are realized as a
$\delta$-aligned family and $K>\Npack(B(o,R),\frac{1}{L})$, then under any
$L$-Lipschitz heads at most $\Npack(B(o,R),\frac{1}{L})$ of them attain unit
margin. This follows by applying (i) to the subfamily on which the margin holds.
At least $K-\Npack(B(o,R),\frac{1}{L})$ concepts therefore fail on some view.
Induced redundancy is this count: redundancy manufactured by the
geometry, not present in the data. Saturation, in this lens, is not noise. It is
pigeonholing.

\paragraph{Curvature sets the growth law.}
The theorem sandwiches the readable count between two packings of one ball, at
scales that differ by $2\delta$. Geometry enters through one quantity:
ball-packing growth in radius. Appendix~\ref{app:prelim} proves matching bounds
for both geometries:
\begin{equation}
\Npack\big(B_{\mathbb
R^n}(0,R),\varepsilon\big)\le\Big(1+\tfrac{2R}{\varepsilon}\Big)^{n},
\qquad
\Npack\big(B_{\mathbb H^n_\kappa}(R),\varepsilon\big)\ge
c_{\varepsilon}\,e^{(n-1)\sqrt{-\kappa}\,R}.
\end{equation}

The Euclidean bound grows polynomially in $R$; the hyperbolic bound grows
exponentially. Both bounds are two-sided
(Lemmas~\ref{lem:euclid-packing}--\ref{lem:hyperbolic-upper}). Products inherit
the exponential lower bound factor-wise, so bits add across factors
(Proposition~\ref{prop:product-packing}).

\begin{corollary}[Radius cost of bits]
\label{cor:bits}
Fix $L>0$ and $\delta\ge0$.
\begin{enumerate}
\item[(i)] In $\mathbb R^n$ the readable-concept count grows polynomially in
$R$: hosting $\log K$ bits needs a radius exponential in $(\log K)/n$.
\item[(ii)] In $\mathbb H^n_\kappa$ with $\kappa<0$ it grows exponentially in
$R$: the required radius is linear in $\log K$.
\item[(iii)] At fixed $(n,R,L,\delta)$, replacing the flat metric by curvature
$\kappa<0$ multiplies the guaranteed count by a factor that grows
exponentially in $R$.
\end{enumerate}
\end{corollary}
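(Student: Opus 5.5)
The plan is to read Corollary~\ref{cor:bits} as the composition of Theorem~\ref{thm:capacity} with the two packing growth laws in the display above: Lemmas~\ref{lem:euclid-packing}--\ref{lem:hyperbolic-upper}. Part~(i) of the theorem gives an \emph{upper} bound on the readable count, $K\le\Npack(B(o,R),\frac{1}{L})$. Part~(ii) gives a \emph{lower} bound, since any $(\frac{1}{L}+2\delta)$-packing of $B(o,R)$ can be used as the anchors $s_1,\dots,s_K$. Hence the guaranteed readable count is at least $\Npack(B(o,R),\frac{1}{L}+2\delta)$. Each clause of the corollary then becomes a statement about packing numbers at the fixed scales $\varepsilon_1=\frac{1}{L}$ and $\varepsilon_2=\frac{1}{L}+2\delta$, which do not depend on $R$.

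For (i), I take the Euclidean case. Combining Theorem~\ref{thm:capacity}(i) with the Euclidean packing bound gives $K\le(1+2RL)^n$, which is polynomial of degree $n$ in $R$. Inverting this, hosting $K$ readable concepts requires $R\ge\frac{1}{2L}\big(K^{1/n}-1\big)=\frac{1}{2L}\big(e^{(\log K)/n}-1\big)$. So the required radius is exponential in $(\log K)/n$. To show that this radius also suffices up to constants, I would use the matching volumetric lower bound $\Npack(B_{\mathbb R^n}(0,R),\varepsilon_2)\ge(R/\varepsilon_2)^n$ from Lemma~\ref{lem:euclid-packing}, fed through Theorem~\ref{thm:capacity}(ii).

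For (ii), I take the hyperbolic case. The lower bound $\Npack\ge c_{\varepsilon_2}e^{(n-1)\sqrt{-\kappa}R}$, fed through Theorem~\ref{thm:capacity}(ii), shows that $K$ readable concepts are guaranteed once $R\ge\frac{\log K-\log c_{\varepsilon_2}}{(n-1)\sqrt{-\kappa}}$, which is linear in $\log K$. The matching upper bound from Lemma~\ref{lem:hyperbolic-upper} (a volume bound of the form $C_{\varepsilon_1}e^{(n-1)\sqrt{-\kappa}R}$), combined with Theorem~\ref{thm:capacity}(i), shows that linear growth in $\log K$ is also necessary. I would state this for $n\ge2$, since for $n=1$ the exponent vanishes and $\mathbb H^1$ is isometric to $\mathbb R$.

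For (iii), the claim is that the ratio of the hyperbolic guaranteed count to the flat guaranteed count grows exponentially in $R$. I would bound the ratio below by
\[
\frac{c_{\varepsilon_2}e^{(n-1)\sqrt{-\kappa}R}}{(1+2R/\varepsilon_2)^n}.
\]
Here the numerator is the hyperbolic lower bound, and the denominator is the Euclidean upper bound, used as an upper bound on the flat guaranteed count. This ratio is $e^{(n-1)\sqrt{-\kappa}R-O(n\log R)}$, which is exponential in $R$.

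The main obstacle is bookkeeping rather than any single estimate. The constant $c_\varepsilon$ and the scale $\varepsilon_2$ must not secretly depend on $R$, and the two-sided bounds must be applied at the right scales: the upper bounds at $\frac{1}{L}$ and the lower bounds at $\frac{1}{L}+2\delta$. The hyperbolic lower bound also needs to hold for $R$ large relative to $\varepsilon$. I would handle this by restricting every statement to $R\ge R_0(\varepsilon_2,\kappa,n)$ and reading ``grows exponentially/polynomially'' asymptotically in $R$.
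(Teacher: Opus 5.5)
Your proposal is correct and follows the paper's proof essentially step for step: the same sandwich scales $\varepsilon_1=\frac{1}{L}$ and $\varepsilon_2=\frac{1}{L}+2\delta$, the same inversion $R\ge\frac{1}{2L}(K^{1/n}-1)$ with sufficiency from $(R/\varepsilon_2)^n$ in (i), and the same two-sided exponential packing bounds in (ii). The only difference is in (iii). There the paper divides the hyperbolic guarantee by the flat obstruction $(1+2LR)^n$ at scale $\varepsilon_1$, rather than by your flat-guarantee bound $(1+2R/\varepsilon_2)^n$; this yields the slightly stronger conclusion that the curved guarantee exceeds every achievable flat readable count, not only the flat guaranteed one, and both ratios grow exponentially in $R$.
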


The proof is in Appendix~\ref{app:prelim}.

Demand compounds while flat supply crawls. Compositional states multiply across
modalities, so demanded bits add per modality. A third modality is where
additive demand can cross a logarithmic flat supply. Tightening alignment
narrows the sandwich, but only the metric changes the growth law. The invariance
term of \rvicreg\ shrinks $\delta$ in the metric whose capacity grows
exponentially. Its variance and covariance terms prevent degenerate collapse
into a vanishing region that would destroy capacity in any geometry.

The theorem gives both an obstruction and a guarantee. It does not certify that
a given dataset's demand exceeds the Euclidean supply. The saturation diagnostic
addresses that question.

The supply side leaves a visible signature: curvature buys capacity, alignment
spends it, and the codes crowd into the exponential-volume shell.
Figure~\ref{fig:geometry-showcase} shows the signature on a trained run: $71\%$
of codes sit beyond geodesic radius $2$.

The design principle has a falsifiable footprint: help where the flat objective
saturates and impose mild cost where it does not. Section~\ref{sec:results}
tests this footprint.

\section{Riemannian VICReg}
\label{sec:method}

\paragraph{Masked-view scaffold.}
We adopt a LANISTR-style pretrain-then-finetune pipeline for image ($I$), text
($T$), and tabular ($\mathrm{Tab}$) inputs~\citep{ebrahimi2023lanistr}. Each
minibatch yields an unmasked view $a_1(x)$ and a masked view $a_2(x)$ via
modality-specific masking; modality encoders feed single-layer projections into
a fusion Transformer, and a projector $g$ maps the fused CLS representations to
codes $z_1,z_2\in\mathbb R^D$. Pretraining combines weighted unimodal
masked-reconstruction losses with a masked-view alignment loss
$\mathcal L_{\mathrm{MMM}}$; the compared objectives (\simsiam, \vicreg,
\rvicreg) instantiate only $\mathcal L_{\mathrm{MMM}}$, and finetuning attaches
a classifier to the fusion CLS embedding. Hyperparameters and the minibatch
algorithm are in Appendix~\ref{app:hyperparams}.

\subsection{Geodesic invariance on a product of curved factors}

\rvicreg\ represents the code $z_i\in\mathbb R^D$ as concatenated tangent
vectors. We partition it into $F$ factors $z_i^{(f)}\in\mathbb R^{d}$ with
$D=Fd$. Each factor carries a learnable curvature $\kappa_f<0$, and the
exponential map at the origin sends it to the $\kappa$-stereographic model:
$x_i^{(f)}=\exp_0^{\kappa_f}(z_i^{(f)})$. Closed-form $\exp$, $\log$, M\"obius
addition, and geodesic distance are collected in Table~\ref{tab:kappa-ops}
(Appendix~\ref{app:kappa-ops}).

The curved factors are not an arbitrary choice. We read each factor's code as
a location paired with its alignment uncertainty; the metric invariant under
sufficient statistics is then the Fisher--Rao metric, and for isotropic
Gaussian factors it is hyperbolic (Appendix~\ref{app:fisher}). Euclidean
\vicreg\ is the slice on which the uncertainty coordinate is frozen, and
\rvicreg\ unfreezes that coordinate with one learnable curvature per factor.

The product manifold is constant-curvature factor-wise, not globally; distances
combine by the product metric, and Proposition~\ref{prop:product-packing} bounds
its packing capacity from below.

The invariance term replaces Euclidean mean squared error with normalized
squared geodesic distance,
\begin{equation}
\label{eq:rvicreg-inv}
\mathcal L_{\mathrm{inv}}
:= \mathbb
E\Bigg[\frac{1}{F}\sum_{f=1}^{F}\frac{d_{\kappa_f}\big(x_1^{(f)},
x_2^{(f)}\big)^{2}}{4d}\Bigg],
\end{equation}
where the $4$ matches the stereographic convention $d_0(x,y)^2=4\norm{x-y}_2^2$
and $1/(Fd)=1/D$ matches \vicreg's normalization.

\begin{figure}[tb]
\centering
\includegraphics[width=\linewidth]{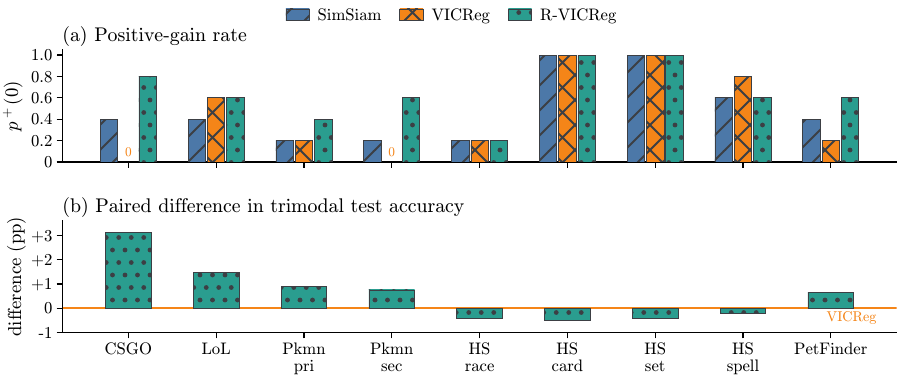}
\caption{Panel (a) reports the per-dataset positive-gain rate
$p^{+}(0)=\Pr(\TriGain>0)$ over five seeds, so rates are multiples of $0.2$.
\vicreg\ scores $0$ of $5$ on CSGO and Pokemon Secondary (marked $0$). Panel
(b) reports the paired difference in trimodal test accuracy between
\rvicreg\ and \vicreg: bars give the five-seed mean and the orange line the
\vicreg\ baseline.}
\label{fig:datasets}
\end{figure}

\subsection{Variance and covariance in Fr\'echet-centered tangent coordinates}

Variance and covariance statistics require a vector space, but a curved factor
is not one. For each factor, with $q_1,\dots,q_{2B}$ the factor codes of both
views in a batch of size $B$, the batch Fr\'echet mean $\mu_f :=
\FM\big(\{q_j\}_{j=1}^{2B}\big)$ is the minimizer of
$\sum_{j}d_{\kappa_f}(\mu,q_j)^{2}$. We approximate it by three
origin-initialized Karcher iterations~\citep{karcher1977center},
\[
\mu\;\leftarrow\;\exp_\mu^{\kappa_f}\Big(\frac{1}{2B}\sum_{j=1}^{2B}\log_\mu^{\kappa_f}(q_j)\Big),
\]
and detach the result. We then log-map both views to the tangent space at
$\mu_f$ and concatenate across factors,
\[
t_{i,b}^{(f)}\;:=\;\log_{\mu_f}^{\kappa_f}\big(x_{i,b}^{(f)}\big),
\qquad
t_{i,b}\;:=\;\big[t_{i,b}^{(1)};\dots;t_{i,b}^{(F)}\big]\in\mathbb R^{D}.
\]
The standard \vicreg\ penalties act on these tangent codes,
\begin{equation}
\label{eq:rvicreg-varcov}
\mathcal L_{\mathrm{var}} := \frac{1}{2}\sum_{i\in\{1,2\}}\mathbb
E\big[\mathrm{ReLU}\big(\gamma-\mathrm{std}(t_i)\big)\big],
\qquad
\mathcal L_{\mathrm{cov}} := \sum_{i\in\{1,2\}}\frac{1}{D}\sum_{a\neq
b}\mathrm{Cov}(t_i)_{ab}^{2}.
\end{equation}
Here $\mathrm{std}(t_i)$ is the per-coordinate standard deviation over the
batch, the expectation in $\mathcal L_{\mathrm{var}}$ averages the $D$
coordinates, and $\mathrm{Cov}(t_i)$ is the $D\times D$ batch covariance of the
tangent codes. The variance term prevents individual tangent coordinates from
collapsing; the covariance term limits redundancy between coordinates. The full
objective is $\mathcal L_{\mathrm{rvicreg}}=\lambda_{\mathrm{inv}}\mathcal
L_{\mathrm{inv}}+\lambda_{\mathrm{var}}\mathcal
L_{\mathrm{var}}+\lambda_{\mathrm{cov}}\mathcal L_{\mathrm{cov}}$ with the exact
\vicreg\ weighting
$(\lambda_{\mathrm{inv}},\lambda_{\mathrm{var}},\lambda_{\mathrm{cov}})=(25,25,
1)$,
$\gamma=1$.

\begin{proposition}[Exact VICReg limit]
\label{prop:euclid-limit}
Fix a minibatch of codes and let $\kappa_f\to0$ for every $f\in\{1,\dots,F\}$.
\begin{enumerate}
\item[(i)] $\exp_0^{\kappa_f}(z_i^{(f)})\to z_i^{(f)}$ and
$d_{\kappa_f}(x_1^{(f)},x_2^{(f)})^{2}\to4\norm{z_1^{(f)}-z_2^{(f)}}_2^{2}$,
so $\mathcal L_{\mathrm{inv}}\to\frac{1}{D}\norm{z_1-z_2}_2^{2}$ in the batch
average.
\item[(ii)] Every Karcher iterate $\mu_f$ converges to the Euclidean batch
mean and $\log_{\mu_f}^{\kappa_f}$ to subtraction, so the tangent codes
converge to the centered codes.
\item[(iii)] Variance and covariance statistics are translation-invariant, so
$\mathcal L_{\mathrm{var}}$ and $\mathcal L_{\mathrm{cov}}$ converge to their
\vicreg\ values.
\end{enumerate}
Hence $\mathcal L_{\mathrm{rvicreg}}\to\mathcal L_{\mathrm{vicreg}}$
(Lemma~\ref{lem:flat-limit}, Appendix~\ref{app:kappa-ops}).
\end{proposition}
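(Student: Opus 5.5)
The plan is to work entirely with the closed-form $\kappa$-stereographic operations of Table~\ref{tab:kappa-ops}. Each of them is built from $\tan_\kappa$, $\tan_\kappa^{-1}$, and M\"obius addition $\oplus_\kappa$, and each of these is real-analytic in $\kappa$ in a neighborhood of $0$, jointly with its vector arguments. Write $\tan_\kappa(u)=\tan(\sqrt{\kappa}u)/\sqrt{\kappa}$, which for $\kappa<0$ reads $\tanh(\sqrt{-\kappa}u)/\sqrt{-\kappa}$. Its Taylor series is $u+\tfrac{\kappa}{3}u^3+O(\kappa^2)$, which is even in $\sqrt{-\kappa}$, so there is no branch issue at $\kappa=0$. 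The same holds for $\tan_\kappa^{-1}$. M\"obius addition is a rational function of $\kappa$ whose denominator tends to $1$. Hence on any compact set of arguments $x\oplus_\kappa y\to x+y$ uniformly.

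Part (i) then follows directly. From $\exp_0^\kappa(v)=\tan_\kappa(\norm{v})\,v/\norm{v}$ we get $\exp_0^\kappa(z)\to z$. From $d_\kappa(x,y)=2\tan_\kappa^{-1}\norm{(-x)\oplus_\kappa y}$ we get $d_\kappa\to 2\norm{y-x}$. The batch is finite, so pointwise convergence is enough. Squaring, dividing by $4d$, averaging over the $F$ factors, and using $Fd=D$ gives $\mathcal L_{\mathrm{inv}}\to\frac1D\norm{z_1-z_2}_2^2$ averaged over the batch.

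Part (ii) is proved by induction over the three Karcher iterations. For fixed $\mu,q$, both $\log_\mu^\kappa(q)=\frac{2}{\lambda_\mu^\kappa}\tan_\kappa^{-1}\norm{(-\mu)\oplus_\kappa q}\cdot\frac{(-\mu)\oplus_\kappa q}{\norm{\cdot}}$ and $\exp_\mu^\kappa(v)$ are continuous jointly in $(\kappa,\mu,q)$ or $(\kappa,\mu,v)$. Their $\kappa=0$ values are $q-\mu$ and $\mu+v$, because $\lambda^0_\mu=2$ under the stereographic convention. The base case is $\mu^{(0)}=0$ for every $\kappa$. Suppose $\mu^{(k)}(\kappa)\to\mu^{(k)}(0)$. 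Joint continuity and the convergence of the inputs $x^{(f)}_{i,b}\to z^{(f)}_{i,b}$ then give
\[\mu^{(k+1)}(\kappa)\to \mu^{(k)}(0)+\tfrac1{2B}\textstyle\sum_j\big(q_j-\mu^{(k)}(0)\big)=\bar q .\]
So already after one iterate, and hence after three, the limit is the Euclidean batch mean. Detaching does not change values. Applying the same joint continuity to $\log^{\kappa_f}_{\mu_f}(x^{(f)}_{i,b})$ gives $t_{i,b}\to z_{i,b}-\bar z$.

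Part (iii) needs two facts. First, the batch standard deviation, and the batch covariance, are continuous functions of the finitely many codes. The only possible exception is where a coordinate has zero variance, and there I would use continuity of $\sqrt{\cdot}$ at $0$ (or VICReg's $\epsilon$ inside the square root). Second, these statistics are unchanged by subtracting a common vector. The tangent codes $t_i$ converge to $z_i-\bar z$, whose statistics equal those of $z_i$. Since $\mathrm{ReLU}$ and squaring are continuous, $\mathcal L_{\mathrm{var}},\mathcal L_{\mathrm{cov}}$ converge to their VICReg values. A linear combination with the fixed weights $(25,25,1)$ then gives $\mathcal L_{\mathrm{rvicreg}}\to\mathcal L_{\mathrm{vicreg}}$.

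The main obstacle is making the joint continuity at $\kappa=0$ precise. The issue is that the formulas contain $\sqrt{-\kappa}$ and apparent $0/0$ terms at $\norm{v}=0$ or $\mu=q$. I would handle this by writing $\tan_\kappa(u)/u$ and $\tan^{-1}_\kappa(u)/u$ as power series in $\kappa u^2$. These series are entire or locally convergent near $0$ and involve only integer powers of $\kappa$. This also shows that the directional factors $v/\norm{v}$ cancel, so each map extends analytically in $(\kappa,\text{arguments})$. A secondary point is that for the Table's $\kappa<0$ formulas the points must stay in the ball of radius $1/\sqrt{-\kappa}$. That radius tends to $\infty$, so the finite batch is eventually inside it.
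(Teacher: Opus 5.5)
Your proposal is correct and follows essentially the same route as the paper's proof of Lemma~\ref{lem:flat-limit}. Both arguments take the flat limit of each closed-form operation, induct over the finitely many Karcher iterates, and use continuity and translation invariance of the batch statistics; your joint continuity in $(\kappa,\text{arguments})$, made precise by power series in $\kappa u^2$ that remove the apparent singularities at $\norm{v}=0$ and in $\sqrt{-\kappa}$, is an equivalent and somewhat more explicit form of the paper's ``uniform on compacta'' step.
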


\vicreg\ is the boundary case of the whole objective: the family is continuous
in the geometry, and \vicreg's loss weights transfer unchanged.

\paragraph{Learnable curvature and stability.}
Per-factor curvature is parameterized as
$\kappa_f=-\kappa_{\max}\,\sigma(\theta_f)$ with $\kappa_{\max}=1.5$, so
$\kappa_f\in(-\kappa_{\max},0)$ while $\theta_f\in\mathbb R$ trains by
backpropagation; we initialize $\theta_f$ by inverse sigmoid to target
$\kappa_0=-1$ with small per-factor jitter. For stability in the Poincar\'e ball
of radius $1/\sqrt{-\kappa_f}$, we clip tangent norms before the exponential map
and project points exceeding the boundary. We monitor boundary
utilization $u=\sqrt{-\kappa_f}\,\norm{x^{(f)}}_2\in[0,1)$: values near $1$
mean embeddings occupy the high-volume region where the exponential capacity is
spent.

\section{Experimental Design}
\label{sec:experiments}

\paragraph{Datasets and modality sweep.}
We use nine trimodal image--text--tabular classification datasets: eight from
MuG-Bench~\citep{lu2023mug}, namely CSGO ($n{=}767$), LoL ($n{=}1{,}006$),
Pokemon Primary and Secondary type ($n{=}719$ each), Hearthstone Race
($8{,}568$), CardClass ($13{,}153$), Set ($13{,}087$), and Spell ($3{,}924$),
plus PetFinder ($n{=}11{,}996$ training examples, $80/5/15$
split)~\citep{kaggle2019petfinder}.
For each dataset and method, we train all seven modality subsets
$\{I,T,\mathrm{Tab},(I,T),(I,\mathrm{Tab}),(T,\mathrm{Tab}),\Tri\}$ under the
same pipeline. These subsets measure what the third modality adds over every
bimodal alternative.

\paragraph{Saturation metrics.}
For each run (dataset $\times$ seed), let $S(\cdot)$ be downstream test
accuracy. The best bimodal baseline is $\Bstar :=
\max\{S(I,T),\,S(I,\mathrm{Tab}),\,S(T,\mathrm{Tab})\}$, and the trimodal
marginal gain is $\TriGain := S(\Tri)-\Bstar$. A run is saturated at
threshold $\tau$ if $\TriGain\le\tau$. We sweep $\tau\in\{0,0.005,0.01\}$ and
report the reliability $p^{+}(\tau):=\Pr(\TriGain>\tau)$ across paired runs.
For \rvicreg\ against
\vicreg, a rescue at threshold $\tau$ is a paired run with
$\TriGainOf{\vicreg}\le\tau$ and $\TriGainOf{\rvicreg}>\tau$. A harm is
the reverse. Saturation is a run-level failure event, so we treat it as one:
the primary endpoint is how often the third modality helps, not the average
accuracy shift.

\paragraph{Protocol and controls.}
Each method contributes $N=45$ paired runs (nine datasets, five seeds). Each run
pretrains $50$ epochs, selects the checkpoint with lowest pretraining loss,
finetunes $50$ epochs, selects by validation accuracy, and reports test
accuracy. \simsiam\ serves as a second Euclidean baseline: it shares no
variance--covariance regularizer with \vicreg, so saturation under both
objectives implicates Euclidean alignment rather than a regularizer.

\section{Results}
\label{sec:results}

\begin{figure}[tb]
\centering
\includegraphics[width=0.58\linewidth]{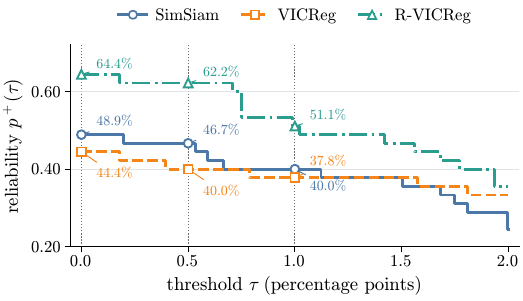}%
\vspace{-2pt}
\caption{Reliability $p^{+}(\tau)=\Pr(\TriGain>\tau)$ over $N=45$ paired runs
per method, swept over $\tau\in\{0,0.005,0.01\}$: \rvicreg\ stays above both
baselines at every threshold, so the ranking is no artifact of $\tau=0$.}
\label{fig:saturation}
\end{figure}

\subsection{Euclidean alignment saturates; \rvicreg\ restores reliability}

Does saturation arise from one objective, or from the geometry? At $\tau=0$,
\vicreg\ saturates in
$55.6\%$ of runs, while \simsiam\ saturates in $51.1\%$. Two Euclidean
objectives with different anti-collapse mechanisms fail at comparable rates.
This result supports the geometric account, but not a regularizer-specific one.
Monotonicity violation is not an edge case; it is the modal outcome.

At $\tau=0$, \rvicreg\ attains $64.4\%$ ($29/45$) against $44.4\%$ ($20/45$) for
\vicreg: a twenty-point shift in the probability that a third modality is worth
adding, at fixed dimension, architecture, and loss weights. The gains cost no
absolute accuracy: mean trimodal accuracy is $0.627$ for \rvicreg, $0.621$ for
\vicreg, and $0.619$ for \simsiam, and Figure~\ref{fig:datasets}(b) reports the
per-dataset paired differences. Figure~\ref{fig:saturation} reports
$p^{+}(\tau)$ across the sweep.

\subsection{The gains are selective, and the harms are real}

Does \rvicreg\ rescue saturated runs without harming unsaturated ones?
Figure~\ref{fig:rescue} (Appendix~\ref{app:geometry}) answers. At $\tau=0$,
\rvicreg\ rescues $13$ of the $25$
runs \vicreg\ saturates and harms $4$ of the $20$ it does not.

Where $\TriGainOf{\vicreg}\le0$, mean paired improvement $\delta$ is $+1.38$
points ($n=25$); elsewhere it is $-0.76$ ($n=20$). This split is the footprint
Section~\ref{sec:theory} predicts; a uniform improvement would have argued
against the capacity account. The method pays off where capacity binds and costs a
little where it does not.

\subsection{What the learned geometry does and does not explain}
\label{sec:secondary}

Does the learned geometry stay hyperbolic in use? It does. The learned curvature
stays negative on every dataset, with mean $\kappa=-0.934$ and range
$[-0.992,-0.820]$, and boundary utilization averages $0.987$. The codes live in
the region where volume grows exponentially, the regime that
Figure~\ref{fig:geometry-showcase} renders.

The per-dataset pattern matches the account. Gains concentrate where
\vicreg\ saturates most, and the two tasks that never saturate show no
difference. Figure~\ref{fig:datasets} gives the breakdown, and
Appendix~\ref{app:geometry} details the runs where \rvicreg\ is worse.

Which saturated runs get rescued? The geometry does not say. Rescued and
not-rescued runs are indistinguishable in curvature magnitude ($0.946$
vs.\ $0.955$) and in boundary utilization ($0.987$ vs.\ $0.983$). The
mechanism evidence is aggregate, not run-level.

\section{Discussion}
\label{sec:discussion}

The intervention modifies one object: the metric in which masked views are
aligned. The rest of the pipeline remains unchanged. The invariance metric therefore co-determines what survives into the
representation. However, if a bimodal-versus-trimodal sweep shows $\TriGain>0$ under the Euclidean
objective, capacity is not binding and switching the metric will more often cost
a little accuracy. \rvicreg\ is conditioned on its diagnostic: a checkable
condition turns geometric theory into a decision rule.

\paragraph{Limitations.}
The real limitation is predictive. Curvature does not reveal rescue: rescued
and not-rescued runs share the same learned geometry
(Section~\ref{sec:secondary}), so the hypothesis holds in aggregate, but we
cannot say in advance which saturated runs a metric change will save. The
rescue is not guaranteed, and the harm is real: $4$ of the $20$ unsaturated
runs got worse. Attribution is incomplete as well: the geodesic objective
couples the two views position-dependently, and separating capacity from that
coupling is left to future work.

\section{Conclusion}
\label{sec:conclusion}

Across nine trimodal datasets under masked-view non-contrastive SSL, the
monotonicity expectation fails in half of the paired runs. We proposed a
geometric account: the invariance metric sets alignment tightness and readout
capacity. Changing only that metric raised reliability from $44.4\%$ to
$64.4\%$, with thirteen rescues against four harms concentrated where Euclidean
alignment saturates. That selectivity is what the proposal predicted.

\bibliographystyle{plainnat}
\bibliography{references}

\appendix

\setlength{\abovedisplayskip}{9pt plus 2pt minus 2pt}
\setlength{\belowdisplayskip}{9pt plus 2pt minus 2pt}
\setlength{\abovedisplayshortskip}{6pt plus 2pt minus 1pt}
\setlength{\belowdisplayshortskip}{7pt plus 2pt minus 1pt}
\setlength{\jot}{5pt}
\setlength{\parskip}{5pt plus 1.5pt minus 1pt}
\setlength{\topsep}{11pt plus 3pt minus 2pt}
\setlength{\partopsep}{3pt plus 1pt minus 1pt}
\renewcommand{\arraystretch}{1.18}

\section{Packing, Covering, and Growth-Law Proofs}
\label{app:prelim}

\begin{definition}[Packing number]
\label{def:packing}
For a metric space $(\mathcal Z,\rho)$ and $S\subseteq\mathcal Z$,
\begin{equation}
\Npack(S,\varepsilon;\rho)
:= \max\Big\{K\in\mathbb N:\ \exists\, z_1,\dots,z_K\in
S,\ \min_{i\neq j}\rho(z_i,z_j)\ge\varepsilon\Big\}.
\end{equation}
\end{definition}

\begin{definition}[Lipschitz functions]
\label{def:lipschitz}
For metric spaces $(\mathcal Z,\rho)$ and $(\mathcal W,\eta)$, $h:\mathcal
Z\to\mathcal W$ is $L$-Lipschitz if $\eta(h(z),h(z'))\le L\rho(z,z')$ for all
$z,z'\in\mathcal Z$.
\end{definition}

\begin{definition}[Metric ball]
For $(\mathcal Z,\rho)$, $z\in\mathcal Z$, $R>0$: $B_{\mathcal
Z}(z,R):=\{u\in\mathcal Z:\rho(u,z)\le R\}$.
\end{definition}

We now prove Theorem~\ref{thm:capacity}.

\begin{proof}
\begin{enumerate}
\item[(i)] Let $h_1,\dots,h_K$ witness readability, and fix $i\neq j$ in
$\{1,\dots,K\}$, $c\in C_i$, and $c'\in C_j$. The function $g:=h_i-h_j$ is
$2L$-Lipschitz as a difference of two $L$-Lipschitz functions, and the
readability condition gives $g(c)\ge1$ and $g(c')\le-1$. Hence
\[
2\;\le\;g(c)-g(c')\;\le\;2L\,d(c,c'),
\]
so $d(c,c')\ge\tfrac{1}{L}$. Choosing one code $c_i\in C_i$ for each
$i\in\{1,\dots,K\}$ therefore produces $K$ points of $B(o,R)$ that are
pairwise at distance at least $\tfrac{1}{L}$, and
$K\le\Npack\big(B(o,R),\tfrac{1}{L};d\big)$ follows from
Definition~\ref{def:packing}.
\item[(ii)] For each $i\in\{1,\dots,K\}$ define
\[
h_i(x)\;:=\;\min\bigl\{1,\;\max\bigl\{0,\;1-L\,(d(x,s_i)-\delta)\bigr\}\bigr\}.
\]
Each $h_i$ is $L$-Lipschitz as a composition of $1$-Lipschitz clamps with an
$L$-Lipschitz function of distance. Fix $i\in\{1,\dots,K\}$ and $c\in C_i$.
The conditions $s_i\in C_i$ and $\diam(C_i)\le\delta$ give $d(c,s_i)\le\delta$,
so $h_i(c)=1$. For $j\in\{1,\dots,K\}$ with $j\neq i$, the triangle inequality
gives
\[
d(c,s_j)\;\ge\;d(s_i,s_j)-d(s_i,c)\;\ge\;\tfrac{1}{L}+2\delta-\delta\;=\;\tfrac{1}{L}+\delta,
\]
so $1-L\,(d(c,s_j)-\delta)\le0$ and $h_j(c)=0$. The readability condition
therefore holds with margin $1$.
\end{enumerate}
\end{proof}

\begin{lemma}[Euclidean ball packing is polynomial in radius]
\label{lem:euclid-packing}
For $\mathbb R^n$ with $d_E(x,y)=\norm{x-y}_2$ and any $R\ge\varepsilon>0$,
\begin{equation}
\Big(\frac{R}{\varepsilon}\Big)^{n}\ \le\ \Npack\big(B_{\mathbb
R^n}(0,R),\varepsilon;d_E\big)\ \le\ \Big(1+\frac{2R}{\varepsilon}\Big)^{n}.
\end{equation}
For the stereographically scaled metric $d_0(x,y)=2\norm{x-y}_2$ the upper bound
becomes $(1+4R/\varepsilon)^n$.
\end{lemma}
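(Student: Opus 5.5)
Both bounds come from the standard volume-comparison argument in $\mathbb R^n$. Write $V_n:=\mathrm{vol}(B_{\mathbb R^n}(0,1))$, so that a Euclidean ball of radius $r$ has volume $V_n r^n$ by scaling. Throughout I use the packing characterization of Definition~\ref{def:packing}. The upper bound goes through disjointness of small balls, the lower bound through maximality of a packing.

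\emph{Upper bound.} Let $z_1,\dots,z_K\in B(0,R)$ be pairwise at distance at least $\varepsilon$. By the triangle inequality the open balls $B(z_i,\varepsilon/2)$ are pairwise disjoint. Each is contained in $B(0,R+\varepsilon/2)$. Comparing volumes gives
\[
K\,V_n(\varepsilon/2)^n\le V_n(R+\varepsilon/2)^n,
\]
that is, $K\le(1+2R/\varepsilon)^n$. Taking the maximum over packings gives the upper bound; this half does not use $R\ge\varepsilon$.

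\emph{Lower bound.} Take a maximal $\varepsilon$-separated set $z_1,\dots,z_K$ in $B(0,R)$. Such a set exists and is finite by the upper bound, since we can add points greedily until none can be added. Maximality means every $x\in B(0,R)$ lies within distance less than $\varepsilon$ of some $z_i$, so the closed balls $B(z_i,\varepsilon)$ cover $B(0,R)$. Comparing volumes again gives
\[
V_nR^n\le K\,V_n\varepsilon^n,
\]
so $\Npack\ge K\ge(R/\varepsilon)^n$. The hypothesis $R\ge\varepsilon$ only serves to make the bound meaningful (at least $1$); the inequality itself holds regardless.

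\emph{Stereographic variant.} Since $d_0=2d_E$, a set is $\varepsilon$-separated in $d_0$ exactly when it is $\varepsilon/2$-separated in $d_E$. The ball is still the Euclidean ball $B_{\mathbb R^n}(0,R)$ in the underlying coordinates, which is how the phrase ``upper bound becomes $(1+4R/\varepsilon)^n$'' should be read. Substituting $\varepsilon/2$ into the upper bound gives
\[
\Npack(B_{\mathbb R^n}(0,R),\varepsilon;d_0)=\Npack(B_{\mathbb R^n}(0,R),\varepsilon/2;d_E)\le(1+4R/\varepsilon)^n.
\]
There is no real obstacle here. The only points needing care are the strict-versus-non-strict inequalities in the disjointness step (open balls of radius $\varepsilon/2$ suffice) and making explicit which coordinates the ball in the $d_0$ statement uses.
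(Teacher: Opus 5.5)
Your proof is correct and follows the paper's argument: disjoint $\varepsilon/2$-balls inside $B(0,R+\varepsilon/2)$ give the upper bound, and a maximal packing, which is an $\varepsilon$-cover, gives the lower bound. Your rescaling $\Npack(B_{\mathbb R^n}(0,R),\varepsilon;d_0)=\Npack(B_{\mathbb R^n}(0,R),\varepsilon/2;d_E)$ for the stereographic variant, which the paper states without proof, is also correct under the coordinate-ball reading you make explicit.
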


\begin{proof}
Upper bound: if $z_1,\dots,z_K$ is an $\varepsilon$-packing of $B_{\mathbb
R^n}(0,R)$, the balls $B(z_i,\varepsilon/2)$ are disjoint and contained in
$B(0,R+\varepsilon/2)$. Comparing volumes gives
\[
K\,(\varepsilon/2)^n\;\le\;(R+\varepsilon/2)^n.
\]
Lower bound: a maximal $\varepsilon$-packing is an $\varepsilon$-cover, since
any point at distance at least $\varepsilon$ from every center could be added
to the packing; and any $\varepsilon$-cover of $B(0,R)$ needs at least
\[
\frac{\mathrm{vol}\big(B(0,R)\big)}{\mathrm{vol}\big(B(0,\varepsilon)\big)}\;=\;\Big(\frac{R}{\varepsilon}\Big)^{n}
\]
balls.
\end{proof}

Both hyperbolic lemmas rest on one imported fact: the volume of a geodesic
ball of radius $t$ in $\mathbb H^n_\kappa$
is~\citep{ratcliffe2006,cheeger2008comparison}
\begin{equation}
\label{eq:hyperbolic-volume}
\mathrm{Vol}\big(B(t)\big)\;=\;\mathrm{Vol}\big(\mathbb S^{n-1}\big)\int_0^{t}\Big(\frac{\sinh(\sqrt{-\kappa}\,s)}{\sqrt{-\kappa}}\Big)^{n-1}ds.
\end{equation}

\begin{lemma}[Hyperbolic ball packing is exponential in radius]
\label{lem:hyperbolic-packing}
Fix $n\ge2$, $\kappa<0$, $\varepsilon>0$. There exist $c_\varepsilon>0$ and
$R_0$ (depending on $n,\kappa,\varepsilon$) such that for all $R\ge R_0$,
\begin{equation}
\Npack\big(B_{\mathbb
H^n_\kappa}(R),\varepsilon\big)\ \ge\ c_\varepsilon\,
\exp\big((n-1)\sqrt{-\kappa}\,R\big).
\end{equation}
\end{lemma}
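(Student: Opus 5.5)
The plan mirrors the lower-bound half of Lemma~\ref{lem:euclid-packing}: a maximal packing is a cover, and covering numbers are bounded below by a volume ratio. The only geometric input is the volume formula~\eqref{eq:hyperbolic-volume}.

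\emph{Step 1 (maximal packing is a cover).} Take a maximal $\varepsilon$-packing $z_1,\dots,z_K$ of $B(R):=B_{\mathbb H^n_\kappa}(o,R)$, with $K=\Npack(B(R),\varepsilon)$. By maximality, every $u\in B(R)$ satisfies $d(u,z_i)<\varepsilon$ for some $i$; otherwise $u$ could be added to the packing. Hence $B(R)\subseteq\bigcup_i B(z_i,\varepsilon)$. Hyperbolic space is homogeneous, so every ball of radius $\varepsilon$ has the same volume $V_\varepsilon:=\mathrm{Vol}(B(\varepsilon))$, which is finite and positive. Subadditivity of volume then gives
\[
K\ \ge\ \frac{\mathrm{Vol}(B(R))}{V_\varepsilon}.
\]

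\emph{Step 2 (exponential lower bound on $\mathrm{Vol}(B(R))$).} Write $a:=\sqrt{-\kappa}$. For $s\ge s_0:=\ln 2/a$ we have $e^{-as}\le\tfrac12$, so
\[
\sinh(as)=\tfrac12 e^{as}\bigl(1-e^{-2as}\bigr)\ \ge\ \tfrac{3}{8}e^{as}.
\]
Restrict the integral in~\eqref{eq:hyperbolic-volume} to $[R-1/a,\,R]$, taking $R\ge R_0:=s_0+1/a$. On that interval the integrand is at least $\bigl(\tfrac{3}{8a}\bigr)^{n-1}e^{(n-1)a(R-1/a)}$. This yields
\[
\mathrm{Vol}(B(R))\ \ge\ \mathrm{Vol}(\mathbb S^{n-1})\,\frac1a\Bigl(\frac{3}{8a}\Bigr)^{n-1}e^{-(n-1)}\,e^{(n-1)aR}.
\]

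\emph{Step 3 (conclude).} Set $c_\varepsilon:=\mathrm{Vol}(\mathbb S^{n-1})\,a^{-n}(3/8)^{n-1}e^{-(n-1)}/V_\varepsilon$. This constant depends only on $(n,\kappa,\varepsilon)$. Combining Steps 1 and 2 gives the claimed bound for all $R\ge R_0$.

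The hypothesis $n\ge2$ ensures that the exponent $(n-1)a$ is positive, so the growth is genuinely exponential. The only delicate point is the elementary lower bound on $\sinh$ in Step 2. Restricting the integral to a window of fixed length near $R$ is what gives the clean exponent $(n-1)aR$. Integrating the full expression $\bigl(\tfrac{e^{as}}{2a}\bigr)^{n-1}$ from $0$ would also work, but it needs more bookkeeping of constants. No other obstacle is expected.
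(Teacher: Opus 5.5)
Your proposal is correct and follows the paper's proof essentially step for step: a maximal $\varepsilon$-packing is an $\varepsilon$-cover, homogeneity gives $\Npack\ge\mathrm{Vol}(B(R))/\mathrm{Vol}(B(\varepsilon))$, and you then restrict the volume integral to a fixed-length window below $R$ and apply an elementary lower bound on $\sinh$. The differences are cosmetic. You use $\sinh u\ge\tfrac38 e^{u}$ and a window of length $1/\sqrt{-\kappa}$, whereas the paper uses $\sinh u\ge e^{u}/4$ and a window of length $1$; you also write the constant $c_\varepsilon$ out explicitly.
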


\begin{proof}
A maximal $\varepsilon$-packing of $B(R)$ is also an $\varepsilon$-cover, since
any point at distance at least $\varepsilon$ from every center could be added
to the packing. All $\varepsilon$-balls have volume
$\mathrm{Vol}(B(\varepsilon))$ by homogeneity, so
\[
\Npack\big(B_{\mathbb H^n_\kappa}(R),\varepsilon\big)\;\ge\;\frac{\mathrm{Vol}\big(B(R)\big)}{\mathrm{Vol}\big(B(\varepsilon)\big)}.
\]
Since $\sinh$ is increasing and $\sinh u\ge e^{u}/4$ for $u\ge\ln2$,
Equation~\eqref{eq:hyperbolic-volume} gives, for
$R\ge R_0:=1+\ln2/\sqrt{-\kappa}$,
\[
\mathrm{Vol}\big(B(R)\big)\;\ge\;\mathrm{Vol}\big(\mathbb S^{n-1}\big)\int_{R-1}^{R}\Big(\frac{\sinh(\sqrt{-\kappa}\,s)}{\sqrt{-\kappa}}\Big)^{n-1}ds\;\ge\;c_{n,\kappa}\,e^{(n-1)\sqrt{-\kappa}\,R},
\]
and the claim follows with
$c_\varepsilon:=c_{n,\kappa}/\mathrm{Vol}(B(\varepsilon))$.
\end{proof}

\begin{lemma}[Hyperbolic ball packing is at most exponential]
\label{lem:hyperbolic-upper}
Fix $n\ge2$, $\kappa<0$, $\varepsilon>0$. There exists $c'_{\varepsilon}>0$
(depending on $n,\kappa,\varepsilon$) such that for all $R\ge1$,
\begin{equation}
\Npack\big(B_{\mathbb
H^n_\kappa}(R),
\varepsilon\big)\ \le\ \frac{\mathrm{Vol}\big(B(R+\varepsilon/2)\big)}{\mathrm{Vol}\big(B(\varepsilon/2)\big)}\ \le\ c'_{\varepsilon}\,\exp\big((n-1)\sqrt{-\kappa}\,R\big).
\end{equation}
\end{lemma}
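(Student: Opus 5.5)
The proof has two inequalities, handled separately: a volume-comparison argument for the first, and an integral estimate on Equation~\eqref{eq:hyperbolic-volume} for the second.

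\emph{First inequality.} I mirror the upper-bound half of Lemma~\ref{lem:euclid-packing}. Let $z_1,\dots,z_K$ be an $\varepsilon$-packing of $B(R)$, so $d(z_i,z_j)\ge\varepsilon$ for $i\neq j$. The open balls $B(z_i,\varepsilon/2)$ are then pairwise disjoint: a common point would give $d(z_i,z_j)<\varepsilon$ by the triangle inequality. By the triangle inequality again, each such ball lies in $B(o,R+\varepsilon/2)$. Because $\mathbb H^n_\kappa$ is homogeneous, every ball $B(z_i,\varepsilon/2)$ has volume $\mathrm{Vol}(B(\varepsilon/2))$. Additivity of the Riemannian volume then gives
\[
K\,\mathrm{Vol}(B(\varepsilon/2))\le\mathrm{Vol}(B(R+\varepsilon/2)).
\]
Taking the maximum over packings yields the first inequality. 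Open versus closed balls makes no difference, since geodesic spheres have measure zero.

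\emph{Second inequality.} Write $a:=\sqrt{-\kappa}$ and use Equation~\eqref{eq:hyperbolic-volume}. The bound $\sinh u\le e^{u}/2$ holds for all $u\ge0$, so
\[
\mathrm{Vol}(B(t))\le\mathrm{Vol}(\mathbb S^{n-1})(2a)^{-(n-1)}\int_0^t e^{(n-1)as}\,ds\le\frac{\mathrm{Vol}(\mathbb S^{n-1})}{(2a)^{n-1}(n-1)a}\,e^{(n-1)at}.
\]
The hypothesis $n\ge2$ is used here, since it makes $(n-1)a>0$ in the denominator. Now set $t=R+\varepsilon/2$ and factor out $e^{(n-1)a\varepsilon/2}$. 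This gives the claim with
\[
c'_\varepsilon:=\frac{\mathrm{Vol}(\mathbb S^{n-1})\,e^{(n-1)a\varepsilon/2}}{(2a)^{n-1}(n-1)a\,\mathrm{Vol}(B(\varepsilon/2))},
\]
which depends only on $(n,\kappa,\varepsilon)$, is finite, and is positive because $\mathrm{Vol}(B(\varepsilon/2))>0$. The bound in fact holds for every $R>0$; the restriction $R\ge1$ is harmless.

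\emph{Main obstacle.} Neither step is difficult. The only point needing care is the disjointness/containment argument in the first inequality, specifically the appeal to homogeneity so that every small ball has the same volume regardless of its center. This is the same fact that the proof of Lemma~\ref{lem:hyperbolic-packing} already uses.
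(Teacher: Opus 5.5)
Your proof is correct and follows the same two-step route as the paper. The first step uses disjoint $\varepsilon/2$-balls inside $B(R+\varepsilon/2)$, with equal volumes by homogeneity; the second uses $\sinh u\le e^{u}/2$ in the volume formula, absorbing $e^{(n-1)\sqrt{-\kappa}\,\varepsilon/2}$ into the constant. You are slightly more careful than the paper: you use open balls to get genuine disjointness (closed balls can meet when two centers are exactly $\varepsilon$ apart), and you give $c'_\varepsilon$ explicitly.
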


\begin{proof}
Centers of an $\varepsilon$-packing carry pairwise-disjoint geodesic
$\varepsilon/2$-balls contained in $B(R+\varepsilon/2)$; by homogeneity all
$\varepsilon/2$-balls have equal volume, giving the first inequality. Since
$\sinh u\le e^{u}/2$, Equation~\eqref{eq:hyperbolic-volume} gives
$\mathrm{Vol}(B(t))\le C_{n,\kappa}\,e^{(n-1)\sqrt{-\kappa}\,t}$. Substituting
$t=R+\varepsilon/2$ absorbs $e^{(n-1)\sqrt{-\kappa}\,\varepsilon/2}$ into
$c'_{\varepsilon}$.
\end{proof}

\begin{proposition}[Product-hyperbolic packing lower bound]
\label{prop:product-packing}
Let $\mathcal Z=\prod_{f=1}^{F}\mathbb H^{d}_{\kappa_f}$ with $\kappa_f<0$ and
product metric $\rho(z,z')=(\sum_f\rho_f(z_f,z_f')^2)^{1/2}$. For any sets
$S_f$,
\begin{equation}
\Npack\Big(\prod_f
S_f,\varepsilon;\rho\Big)\ \ge\ \prod_f\Npack(S_f,\varepsilon;\rho_f),
\qquad
B_{\mathcal Z}(o,R)\supseteq\prod_f B_{\mathbb H^d_{\kappa_f}}\big(o_f,R/\sqrt
F\big).
\end{equation}
Consequently there exist $c_\varepsilon>0$, $R_0>0$ such that for $R\ge\sqrt F
R_0$,
\begin{equation}
\Npack\big(B_{\mathcal
Z}(o,R),\varepsilon\big)\ \ge\ (c_\varepsilon)^{F}\exp\Big((d-1)\,\frac{R}{\sqrt
F}\sum_{f=1}^{F}\sqrt{-\kappa_f}\Big).
\end{equation}
\end{proposition}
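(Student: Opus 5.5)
The proof has three parts: a product construction for packings, an inclusion of balls, and then Lemma~\ref{lem:hyperbolic-packing} applied in each factor.

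\emph{Product packing.} For each $f$, take an $\varepsilon$-packing $\{z^{(f)}_{k}\}_{k=1}^{K_f}$ of $S_f$ with $K_f=\Npack(S_f,\varepsilon;\rho_f)$. Form all tuples $z=(z^{(1)}_{k_1},\dots,z^{(F)}_{k_F})$; they lie in $\prod_f S_f$, and there are $\prod_f K_f$ of them. If two tuples differ, they differ in some coordinate $f$. Then
\[
\rho(z,z')\ge\rho_f(z_f,z'_f)\ge\varepsilon,
\]
because the product metric dominates each factor distance, the remaining squared terms being nonnegative. Hence the tuples form an $\varepsilon$-packing, which gives the first inequality. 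If some $K_f$ is infinite, the inequality is immediate.

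\emph{Ball inclusion.} If $\rho_f(z_f,o_f)\le R/\sqrt F$ for every $f$, then
\[
\rho(z,o)^2=\sum_f\rho_f(z_f,o_f)^2\le F\cdot R^2/F=R^2,
\]
so $z\in B_{\mathcal Z}(o,R)$.

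\emph{Consequence.} Packing numbers are monotone under inclusion of sets, since a packing of a subset is a packing of the superset. Combining this with the two displays gives
\[
\Npack\big(B_{\mathcal Z}(o,R),\varepsilon\big)\ge\prod_f\Npack\big(B_{\mathbb H^d_{\kappa_f}}(o_f,R/\sqrt F),\varepsilon\big).
\]
Apply Lemma~\ref{lem:hyperbolic-packing} in each factor with $n=d\ge2$ and curvature $\kappa_f$. This yields constants $c_{\varepsilon,f}$ and thresholds $R_{0,f}$ such that the $f$-th factor is at least $c_{\varepsilon,f}\exp((d-1)\sqrt{-\kappa_f}\,R/\sqrt F)$ whenever $R/\sqrt F\ge R_{0,f}$. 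Set $R_0:=\max_f R_{0,f}$ and $c_\varepsilon:=\min_f c_{\varepsilon,f}>0$. Multiplying over $f$ turns the exponents into the sum $\sum_f\sqrt{-\kappa_f}$, which gives the stated bound for $R\ge\sqrt F R_0$.

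The only delicate point is this uniformization of the constants. It is harmless because there are finitely many factors, and everything depends on the fixed $\kappa_f$ and $\varepsilon$. It also requires the hypothesis $d\ge2$ inherited from Lemma~\ref{lem:hyperbolic-packing}, which I would state explicitly: for $d=1$ the exponent is zero and the claim degenerates.
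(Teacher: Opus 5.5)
Your proof is correct and matches the paper's argument: the product of per-factor $\varepsilon$-packings, the ball inclusion at radius $R/\sqrt F$, and Lemma~\ref{lem:hyperbolic-packing} applied in each factor and multiplied. You also make explicit two steps the paper leaves implicit, namely monotonicity under inclusion and the uniform constants $c_\varepsilon=\min_f c_{\varepsilon,f}$, $R_0=\max_f R_{0,f}$; and the $d=1$ case needs no extra hypothesis, since there the bound reads $\Npack\ge c_\varepsilon^F$ and holds trivially with $c_\varepsilon=1$.
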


\begin{proof}
For the packing inequality, take per-factor $\varepsilon$-packings $P_f\subseteq
S_f$; distinct points of $\prod_f P_f$ differ in some factor $f^\star$ by at
least $\varepsilon$, so their product distance is at least $\varepsilon$. For
the ball containment, if $\rho_f(z_f,o_f)\le R/\sqrt F$ for all $f$ then
$\rho(z,o)\le R$. Apply Lemma~\ref{lem:hyperbolic-packing} in each factor at
radius $R/\sqrt F$ and multiply.
\end{proof}

We now prove Corollary~\ref{cor:bits}.

\begin{proof}
Apply Theorem~\ref{thm:capacity} at the two sandwich scales
$\varepsilon_1=\frac{1}{L}$ and $\varepsilon_2=\frac{1}{L}+2\delta$.
\begin{enumerate}
\item[(i)] In $\mathbb R^n$, Lemma~\ref{lem:euclid-packing} bounds both ends
polynomially in $R$. Readability of $K$ concepts forces
\[
K\;\le\;(1+2LR)^{n},
\]
so $R\ge\frac{1}{2L}(K^{1/n}-1)$. A packing of size $(R/\varepsilon_2)^{n}$
exists at scale $\varepsilon_2$, so a radius linear in $\varepsilon_2K^{1/n}$
suffices. In both directions the radius needed for $\log K$ bits scales as
$e^{(\log K)/n}$.
\item[(ii)] In $\mathbb H^n_\kappa$, Lemmas~\ref{lem:hyperbolic-packing}
and~\ref{lem:hyperbolic-upper} bound both ends by
$e^{(n-1)\sqrt{-\kappa}\,R}$ up to constants depending only on
$(n,\kappa,L,\delta)$, so
\[
\log\Npack\;=\;(n-1)\sqrt{-\kappa}\,R+O(1)
\]
at either scale, and the required radius is linear in $\log K$.
\item[(iii)] For the fixed-budget comparison, divide the hyperbolic guarantee
at scale $\varepsilon_2$ by the Euclidean obstruction at scale
$\varepsilon_1$. The ratio is at least
\[
c_{\varepsilon_2}\,e^{(n-1)\sqrt{-\kappa}\,R}\,(1+2LR)^{-n},
\]
which grows exponentially in $R$.
\end{enumerate}
\end{proof}

\section{\texorpdfstring{$\kappa$}{κ}-Stereographic Operations}
\label{app:kappa-ops}

All operations are closed-form and autodiff-compatible.
Table~\ref{tab:kappa-ops} collects them with their flat limits, following the
presentation of the mixed-curvature literature~\citep{skopek2020mixedcurvature}.
Scalar $\kappa$-trigonometry, with the convention $u/\norm{u}_2:=0$ at $u=0$
throughout:
\begin{equation}
\tan_\kappa(r) =
\begin{cases}
\tanh(\sqrt{-\kappa}\,r)/\sqrt{-\kappa} & \kappa<0,\\
r & \kappa=0,
\end{cases}
\qquad
\arctan_\kappa(r) =
\begin{cases}
\mathrm{atanh}(\sqrt{-\kappa}\,r)/\sqrt{-\kappa} & \kappa<0,\\
r & \kappa=0.
\end{cases}
\end{equation}

\begin{table}[h]
\centering
\caption{$\kappa$-stereographic operations used by \rvicreg\ and their flat
limits. Every limit is uniform on compacta (Lemma~\ref{lem:flat-limit}); the
distance limit squares to the invariance-term limit
$d_\kappa(x,y)^{2}\to4\norm{x-y}_2^{2}$ of Proposition~\ref{prop:euclid-limit}.}
\label{tab:kappa-ops}
\begin{tabular}{@{}lll@{}}
\toprule
Operation & Closed form & Limit $\kappa\to0$\\
\midrule
Exponential map at $0$ &
$\exp_0^{\kappa}(u)=\tan_\kappa(\norm{u}_2)\,\dfrac{u}{\norm{u}_2}$ &
$u$\\[2.2ex]
Logarithmic map at $0$ &
$\log_0^{\kappa}(x)=\arctan_\kappa(\norm{x}_2)\,\dfrac{x}{\norm{x}_2}$ &
$x$\\[2.2ex]
M\"obius addition & $x\oplus_\kappa
y=\dfrac{\big(1-2\kappa\ip{x}{y}-\kappa\norm{y}_2^2\big)x+\big(1+\kappa\norm{x}_2^2\big)y}{1-2\kappa\ip{x}{y}+\kappa^2\norm{x}_2^2\norm{y}_2^2}$
& $x+y$\\[2.6ex]
Geodesic distance & $d_\kappa(x,y)=2\arctan_\kappa\!\big(\norm{(-x)\oplus_\kappa
y}_2\big)$ & $2\norm{x-y}_2$\\[1.4ex]
Conformal factor & $\lambda_\mu^\kappa=2/(1+\kappa\norm{\mu}_2^2)$ &
$2$\\[1.4ex]
Logarithmic map at $\mu$ &
$\log_\mu^{\kappa}(x)=\dfrac{2}{\lambda_\mu^{\kappa}}\,
\log_0^{\kappa}\big((-\mu)\oplus_\kappa
x\big)$ & $x-\mu$\\[2.2ex]
Exponential map at $\mu$ &
$\exp_\mu^{\kappa}(v)=\mu\oplus_\kappa\exp_0^{\kappa}\big(\lambda_\mu^{\kappa}v/2\big)$
& $\mu+v$\\[1.4ex]
Karcher step &
$\mu\leftarrow\exp_\mu^{\kappa}\big(\tfrac{1}{2B}\sum_{j}\log_\mu^{\kappa}(q_j)\big)$
& $\mu\leftarrow\tfrac{1}{2B}\sum_{j}q_j$\\
\bottomrule
\end{tabular}
\end{table}

The exponential map at $\mu$ is the one used by the Karcher iteration of
Section~\ref{sec:method}. The next lemma proves the limits and covers the rest
of the objective.

\begin{lemma}[Full-objective flat limit]
\label{lem:flat-limit}
For one factor, let $q_1,\dots,q_{2B}$ be the factor codes of both views and
$\bar q:=\frac{1}{2B}\sum_{j}q_j$. Set $\mu^{(0)}=0$ and, for $s\ge0$,
\[
\mu^{(s+1)}\;=\;\exp_{\mu^{(s)}}^{\kappa}\Big(\frac{1}{2B}\sum_{j=1}^{2B}\log_{\mu^{(s)}}^{\kappa}(q_j)\Big).
\]
For any fixed number of iterations $t\ge1$, as $\kappa\to0$,
\begin{equation}
\mu^{(t)}\longrightarrow\bar q,
\qquad
t_{i,b}^{(f)}=\log_{\mu^{(t)}}^{\kappa}\big(x_{i,b}^{(f)}\big)\longrightarrow
z_{i,b}^{(f)}-\bar z_f,
\end{equation}
where $\bar z_f$ is the Euclidean mean of both views' factor codes. Consequently
$\mathcal L_{\mathrm{var}}$ and $\mathcal L_{\mathrm{cov}}$ converge to their
\vicreg\ values and, with the invariance limit above, $\mathcal
L_{\mathrm{rvicreg}}\to\mathcal L_{\mathrm{vicreg}}$.
\end{lemma}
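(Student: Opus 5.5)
I will argue by induction on the iteration count $t$, using joint continuity of every closed-form operation in Table~\ref{tab:kappa-ops} in the pair $(\kappa,\text{arguments})$, including at $\kappa=0$. First I establish this continuity on compacta. The scalar functions $\tan_\kappa(r)$ and $\arctan_\kappa(r)$ are real-analytic in $(\kappa,r)$ near $\kappa=0$, with $\tan_\kappa(r)=r-\tfrac{\kappa}{3}r^3\cdot(-1)+\dots$ written as a power series in $\kappa r^2$. For $\kappa<0$ one has $\tanh(\sqrt{-\kappa}\,r)/\sqrt{-\kappa}=r\,\phi(-\kappa r^2)$, where $\phi(w)=\tanh(\sqrt w)/\sqrt w$ is analytic in $w$; the same holds for $\mathrm{atanh}$ on $\sqrt{-\kappa}\,r<1$. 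Hence $\exp_0^\kappa(u)=u\,\phi(-\kappa\norm{u}^2)$ and $\log_0^\kappa(x)=x\,\psi(-\kappa\norm{x}^2)$. This removes the $u/\norm{u}$ singularity and shows both maps converge to the identity uniformly on compact sets. The M\"obius sum is a rational function whose denominator tends to $1$, so it is jointly continuous with limit $x+y$. The conformal factor tends to $2$. It follows that $\log_\mu^\kappa(x)\to x-\mu$ and $\exp_\mu^\kappa(v)\to\mu+v$, locally uniformly in $(\mu,x)$ and $(\mu,v)$. One caveat: for small $|\kappa|$ the ball of radius $1/\sqrt{-\kappa}$ eventually contains any fixed compact set, so the domain restrictions disappear in the limit.

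For the induction itself, the codes $x_{i,b}^{(f)}=\exp_0^\kappa(z_{i,b}^{(f)})$ converge to $z_{i,b}^{(f)}$, since the batch is fixed and finite. The base case is $\mu^{(0)}=0$. Suppose $\mu^{(s)}\to\bar q$, where $\bar q$ denotes the limiting mean $\bar z_f$. The Karcher step is a finite composition of jointly continuous maps evaluated at convergent arguments. Its limit is therefore $\bar q+\frac{1}{2B}\sum_j(q_j-\bar q)=\bar q$. This gives $\mu^{(s+1)}\to\bar q$; in fact the first step already lands on $\bar q$ in the limit, since $\mu^{(0)}=0$. Continuity of $\log$ then gives $t^{(f)}_{i,b}\to z^{(f)}_{i,b}-\bar z_f$.

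For the losses, the per-coordinate standard deviation, $\mathrm{ReLU}$, and batch covariance are continuous functions of the finite array of tangent codes. They are also invariant under subtracting a common vector: VICReg centers internally, so $\mathrm{Cov}(z-\bar z)=\mathrm{Cov}(z)$. Hence $\mathcal L_{\mathrm{var}}$ and $\mathcal L_{\mathrm{cov}}$ converge to their VICReg values. For the invariance term, $d_\kappa(x,y)=2\arctan_\kappa\norm{(-x)\oplus_\kappa y}\to2\norm{y-x}$ by the same continuity, and $x_i\to z_i$. So $d_\kappa^2/(4d)\to\norm{z_1^{(f)}-z_2^{(f)}}^2/d$, and averaging over the $F$ factors yields $\frac1D\norm{z_1-z_2}^2$. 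Combining the three terms with fixed weights gives $\mathcal L_{\mathrm{rvicreg}}\to\mathcal L_{\mathrm{vicreg}}$.

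The main obstacle is making joint continuity at $\kappa=0$ rigorous despite the $\sqrt{-\kappa}$ factors and the $u/\norm{u}$ convention; the analytic reparametrization in $\kappa\norm{u}^2$ above handles this. A secondary point is that the paper's clipping and projection are inactive for small $|\kappa|$, so the limit is unaffected by them.
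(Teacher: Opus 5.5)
Your proposal is correct and follows the paper's own proof. Both arguments rest on convergence of the closed-form $\kappa$-operations to their flat counterparts uniformly on compacta, then induction on the Karcher iterates (the first step already lands on the mean), then continuity and translation invariance of the variance and covariance statistics, together with the distance limit for the invariance term. Your rewriting $\exp_0^\kappa(u)=u\,\phi(-\kappa\norm{u}_2^2)$ makes rigorous the joint continuity at $\kappa=0$ that the paper only asserts. Your remark that clipping and projection become inactive is outside the lemma, which concerns the unclipped objective, and it would hold only if the clip threshold scales with $1/\sqrt{-\kappa}$.
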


\begin{proof}
All codes in the fixed minibatch have bounded norms, and only finitely many
iterations are taken, so every quantity below ranges over a compact set and
every limit is uniform on compacta. As $\kappa\to0$, the scalar maps
$\tan_\kappa,\arctan_\kappa\to\mathrm{id}$ uniformly on compacta, hence
$\exp_0^{\kappa},\log_0^{\kappa}\to\mathrm{id}$; the conformal factor satisfies
$\lambda_\mu^{\kappa}\to2$; and the M\"obius formula gives $x\oplus_\kappa y\to
x+y$. Therefore
$\log_\mu^{\kappa}(x)=\frac{2}{\lambda_\mu^{\kappa}}\log_0^{\kappa}\big((-\mu)\oplus_\kappa
x\big)\to x-\mu$ and
$\exp_\mu^{\kappa}(v)=\mu\oplus_\kappa\exp_0^{\kappa}(\lambda_\mu^{\kappa}v/2)\to\mu+v$.
For the iterates: $\mu^{(1)}\to0+\frac{1}{2B}\sum_j(q_j-0)=\bar q$, and if
$\mu^{(s)}\to\bar q$ then $\log_{\mu^{(s)}}^{\kappa}(q_j)\to q_j-\bar q$, whose
average tends to $0$, so $\mu^{(s+1)}\to\bar q+0=\bar q$; induction gives
$\mu^{(t)}\to\bar q$ for every fixed $t\ge1$.
For the tangent codes: $x_{i,b}^{(f)}=\exp_0^{\kappa}(z_{i,b}^{(f)})\to
z_{i,b}^{(f)}$ and $\bar q\to\bar z_f$, so $t_{i,b}^{(f)}\to z_{i,b}^{(f)}-\bar
z_f$; concatenation over factors gives $t_{i,b}\to z_{i,b}-\bar z$.
Finally, subtracting the common batch vector $\bar z$ from every column changes
neither per-dimension standard deviations nor covariance matrices, and both are
continuous in the finitely many tangent codes, so $\mathcal L_{\mathrm{var}}$
and $\mathcal L_{\mathrm{cov}}$ converge to the \vicreg\ values computed on
$z_1,z_2$. The detachment of $\mu$ is immaterial in the limit: it is a
batch-dependent constant whose only effect is the common shift. The weighted sum
then converges term by term.
\end{proof}

\section{The Canonical Geometry of Location--Uncertainty Codes}
\label{app:fisher}

This appendix motivates, from information geometry, why a product of learnable
negative-curvature factors is a canonical geometry for codes that carry
per-factor location and uncertainty. It is motivation for the design;
Theorem~\ref{thm:capacity} and all empirical claims are independent of it.

The modeling axiom is stated explicitly: suppose each factor's code is read as a
statistical experiment. It is a location--scale state $(\mu,\sigma)$ with
$\mu\in\mathbb R^{m}$ and $\sigma>0$. It combines a fitted position with its
alignment uncertainty. Suppose the geometry placed on codes must be invariant
under sufficient statistics and monotone under Markov morphisms. Data processing
cannot increase statistical distinguishability.

\begin{proposition}[Fisher--Rao geometry of location--uncertainty codes]
\label{prop:fisher}
Under the modeling axiom, the following classical facts apply.
\begin{enumerate}
\item[(i)] Invariant metric: the metric is the Fisher--Rao metric, uniquely up
to a positive scale on each independent
factor~\citep{chentsov1982statistical,ay2017information}.
\item[(ii)] Gaussian factors are hyperbolic: for the isotropic Gaussian family
$\mathcal N(\mu,\sigma^2I_m)$ on $\mathbb R^{m}$, the Fisher--Rao line element
is
\[
ds^2\;=\;\frac{\norm{d\mu}_2^2+2m\,d\sigma^2}{\sigma^2},
\]
and the substitution $y=\sqrt{2m}\,\sigma$ gives
\[
ds^2\;=\;2m\,\frac{\norm{d\mu}_2^2+dy^2}{y^2},
\]
the Poincar\'e half-space metric scaled by $2m$, hence $\mathbb H^{m+1}$ with
constant curvature
$-1/(2m)$~\citep{atkinson1981rao,skovgaard1984riemannian,costa2015fisher}.
General regular location--scale families give scaled hyperbolic
factors~\citep{atkinson1981rao}.
\item[(iii)] Independent factors compose: a product of $F$ independent
location--scale factors carries the product Fisher--Rao geometry
$\prod_{f}\mathbb H^{m_f+1}_{\kappa_f}$ with $\kappa_f<0$, and the per-factor
scale freedom of (i) is one curvature magnitude per factor. A learnable
$\kappa_f$ parameterizes that freedom, and only that freedom.
\end{enumerate}
\end{proposition}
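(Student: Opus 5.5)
The plan is to treat (i) and (iii) as bookkeeping on top of cited classical theorems, and to put the actual work into the computation in (ii). For (i), I will invoke Chentsov's uniqueness theorem in the form of \citet{ay2017information}. That form extends invariance under sufficient statistics and Markov morphisms from finite sample spaces to general parametrized measure models. It yields the Fisher--Rao metric up to a positive constant. The one interpretive point to flag is that Chentsov's scale is global for a single statistical model. The \emph{per-factor} freedom arises only because the axiom is applied to each factor's experiment separately. I will state this explicitly, since it is what later licenses one curvature per factor.

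For (ii), I will compute the Fisher information of $\mathcal N(\mu,\sigma^2 I_m)$ directly from the log-density
\[
\ell(x;\mu,\sigma)=-m\log\sigma-\frac{\norm{x-\mu}_2^2}{2\sigma^2}+\mathrm{const}.
\]
The scores are $\partial_\mu\ell=(x-\mu)/\sigma^2$ and $\partial_\sigma\ell=-m/\sigma+\norm{x-\mu}_2^2/\sigma^3$. Then $\mathbb E[\partial_\mu\ell\,\partial_\mu\ell^\top]=I_m/\sigma^2$. The cross term vanishes because it involves only odd moments of $x-\mu$. Next, $\norm{x-\mu}_2^2/\sigma^2\sim\chi^2_m$ has variance $2m$, so $\mathbb E[(\partial_\sigma\ell)^2]=2m/\sigma^2$. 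This gives the stated line element $(\norm{d\mu}_2^2+2m\,d\sigma^2)/\sigma^2$. The substitution $y=\sqrt{2m}\,\sigma$ uses $d\sigma^2=dy^2/(2m)$ and $\sigma^2=y^2/(2m)$, and produces $2m(\norm{d\mu}_2^2+dy^2)/y^2$. This is the Poincar\'e upper half-space metric on $\mathbb R^m\times(0,\infty)$, of dimension $m+1$, scaled by $2m$. Since scaling a metric by $c$ multiplies sectional curvature by $1/c$, the curvature is $-1/(2m)$. For general regular location--scale families I will only cite \citet{atkinson1981rao}. The same structure appears there: translation and dilation invariance force the form $(a\norm{d\mu}^2+b\,d\sigma^2)/\sigma^2$, and rescaling $\sigma$ again gives a scaled half-space.

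For (iii), I will use that the Fisher information of a product of independent experiments is the sum of the individual informations. With disjoint parameter blocks, that sum is block-diagonal, so the metric is the Riemannian product $\prod_f\mathbb H^{m_f+1}_{\kappa_f}$. The positive scale $c_f$ allowed per factor by (i) rescales factor $f$ only. By the scaling rule above, it corresponds bijectively to a curvature $\kappa_f=-1/(2m_fc_f)\in(-\infty,0)$. Hence a learnable $\kappa_f$ parameterizes exactly this freedom.

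I expect the main obstacle to be the precise status of (i). Chentsov's original theorem covers finite sample spaces, and the infinite-dimensional version requires the regularity hypotheses of \citet{ay2017information}, which I will assume rather than verify. I also need the caveat above about per-factor versus global scale to avoid overstating uniqueness. The computations in (ii) and (iii) are routine once that is settled.
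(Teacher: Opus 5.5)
Your proposal is correct and matches the paper, which offers no argument beyond the cited classical results. Your score computation for (ii), the block-diagonal additivity for (iii), and your caveat that Chentsov's constant becomes per-factor only when the axiom is applied to each factor's experiment separately are exactly what those citations encode.

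One small fix to your aside on general location--scale families: invariance forces only the form $G/\sigma^2$ with $G$ a constant positive-definite matrix. That $G$ can carry $d\mu\,d\sigma$ cross terms or an anisotropic $d\mu$ block when the base density is asymmetric or non-spherical. You must therefore first apply a constant linear shear $\mu\mapsto P\mu+q\sigma$ (completing the square in $d\mu$) before rescaling $\sigma$ gives a scaled half-space.
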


\begin{corollary}[The fixed-uncertainty slice is flat]
\label{cor:flat-slice}
On the slice of a factor with $\sigma$ held fixed, the induced metric is
\[
ds^2\big|_{d\sigma=0}\;=\;\frac{\norm{d\mu}_2^2}{\sigma^2},
\]
which is flat. Euclidean \vicreg\ is, in this reading, the Fisher--Rao geometry
of codes whose uncertainty coordinate is frozen; unfreezing that coordinate
forces negative curvature.
\end{corollary}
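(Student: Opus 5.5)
The statement is Corollary~\ref{cor:flat-slice}: the slice $\sigma=\sigma_0$ is flat. I would restrict the line element of Proposition~\ref{prop:fisher}(ii) and then exhibit an explicit isometry onto Euclidean space.

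First, setting $d\sigma=0$ in $ds^2=(\norm{d\mu}_2^2+2m\,d\sigma^2)/\sigma^2$ gives the induced metric $\norm{d\mu}_2^2/\sigma_0^2$ on the slice $\{(\mu,\sigma_0):\mu\in\mathbb R^m\}$. Pulling back a Riemannian metric to a submanifold is exactly this substitution.

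Second, I would show flatness without computing curvature. The map $\mu\mapsto\mu/\sigma_0$ pulls the standard metric $\norm{du}_2^2$ back to $\norm{d\mu}_2^2/\sigma_0^2$, so it is an isometry of the slice onto $(\mathbb R^m,\norm{\cdot}_2)$. Its Riemann tensor therefore vanishes identically. In effect, the slice carries a constant-coefficient metric, which has zero Christoffel symbols in these coordinates.

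It is worth a remark on geometry. In the half-space picture of Proposition~\ref{prop:fisher}(ii), this slice is a horosphere $y=\sqrt{2m}\,\sigma_0$. Horospheres are intrinsically flat in $\mathbb H^{m+1}$, which is consistent with the claim. The statement concerns intrinsic geometry only, so the slice's extrinsic curvature plays no role.

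Third, for the interpretive sentence, I would note what happens to distances on this slice. Induced path length along the slice, with no excursions in $\sigma$, is $\norm{\mu-\mu'}_2/\sigma_0$. This is a constant multiple of the Euclidean distance that \vicreg's invariance term squares, and the constant can be absorbed into $\lambda_{\mathrm{inv}}$.

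Conversely, the full factor has curvature $-1/(2m)<0$ by Proposition~\ref{prop:fisher}(ii). Allowing $d\sigma\neq0$ therefore yields the negatively curved $\mathbb H^{m+1}$, which justifies "unfreezing forces negative curvature."

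The main subtlety is distinguishing two notions of distance. The induced intrinsic distance on the slice is Euclidean, while the ambient geodesic distance between two slice points is strictly shorter, because the geodesic dips into larger $\sigma$. I would state explicitly that the corollary concerns the induced metric, and that \vicreg\ corresponds to measuring length within the slice.
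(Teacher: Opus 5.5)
Your proposal is correct and follows the paper's route. The paper gives no separate argument: it substitutes $d\sigma=0$ into the line element of Proposition~\ref{prop:fisher}(ii), reads off a flat constant-coefficient metric, and rests the negative-curvature clause on the $-1/(2m)$ curvature of the full factor. Your explicit isometry $\mu\mapsto\mu/\sigma_0$, the horosphere remark, and your caveat that the corollary concerns the induced intrinsic distance, not the strictly shorter ambient geodesic distance, are accurate refinements the paper leaves implicit.
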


Within this reading, tightening alignment (smaller $\sigma$) moves codes toward
the ideal boundary and expands the accessible Fisher volume; the
boundary-utilization statistic of Section~\ref{sec:method} measures the
occupancy of that region.

\section{Per-dataset Reliability and Learned Geometry}
\label{app:geometry}

The aggregate reliability hides substantial dataset heterogeneity.
Figure~\ref{fig:datasets} (Section~\ref{sec:results}) reports the per-dataset
positive-gain fractions and paired trimodal-accuracy differences,
Figure~\ref{fig:rescue} shows
the paired quadrants behind the rescue counts, and
Figure~\ref{fig:app-diagnostics} reports the learned geometry.

\paragraph{Where the gains are, and where they are not.}
The per-dataset positive-gain advantage of \rvicreg\ over \vicreg\ is $+0.80$ on
CSGO, $+0.60$ on Pokemon Secondary, $+0.40$ on PetFinder, and $+0.20$ on Pokemon
Primary. The two largest Hearthstone tasks, CardClass and Set, are positive in
every seed for every method and show no difference: no capacity problem, no
effect. On Hearthstone Spell, \rvicreg\ is worse ($-0.20$), as the capacity
account predicts where flat supply is not binding.

\begin{figure}[tb]
\centering
\includegraphics[width=\linewidth]{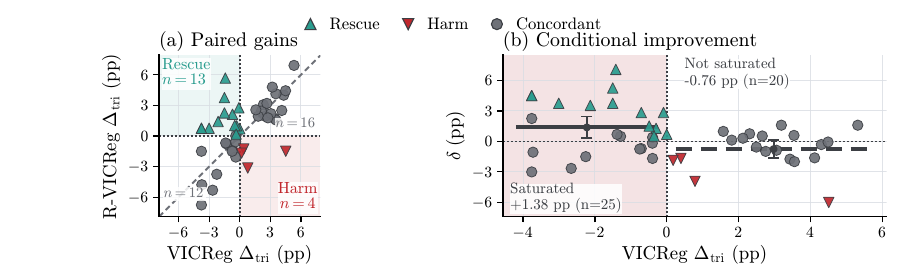}
\caption{A rescue is a paired run that \vicreg\ saturates ($\TriGain\le\tau$)
and \rvicreg\ does not; a harm is the reverse. (a) The quadrants at $\tau=0$
hold $13$ rescues, $4$ harms, and $28$ concordant runs. (b) The paired
improvement $\delta$ is positive where \vicreg\ saturates and negative where it
does not; bars show $95\%$ Student-$t$ intervals.}
\label{fig:rescue}
\end{figure}

\paragraph{The distribution improves in the bulk, not the extreme tail.}
\rvicreg\ improves the mean ($+0.008$ vs.\ $+0.004$), median ($+0.010$
vs.\ $-0.003$), and $10$th percentile ($-0.027$ vs.\ $-0.034$) of $\TriGain$,
but its $5$th percentile ($-0.046$) and minimum ($-0.068$) are worse than
\vicreg's ($-0.038$): the intervention shifts the typical case, not the worst
case. Percentiles use linear interpolation.

\begin{figure}[tb]
\centering
\includegraphics[width=\linewidth]{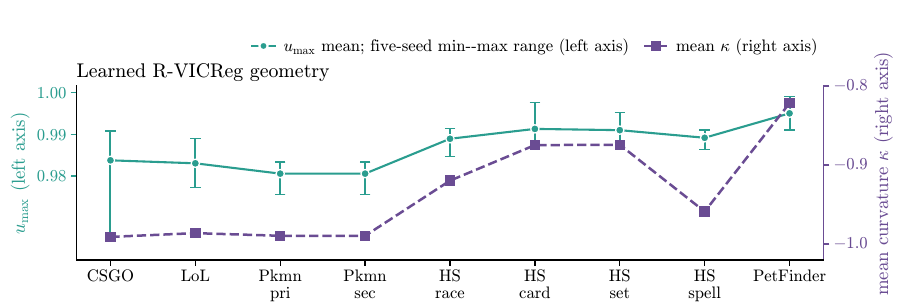}
\caption{The learned \rvicreg\ geometry stays hyperbolic on every dataset:
curvature remains negative and boundary utilization stays high. The left axis
reports $u_{\max}$, with bars spanning the five-seed minimum--maximum range;
the right axis reports the mean learned curvature $\kappa$.}
\label{fig:app-diagnostics}
\end{figure}

\paragraph{Seeing the learned metric.}
Figure~\ref{fig:geometry-showcase} makes one learned geometry visible: the
trimodal CSGO run at seed 456. Panel~(a) draws
the flat \vicreg\ projector with the angle from the top two principal directions
and the radius equal to the true embedding norm; rings at equal distances stay
evenly spaced. Panel~(b) draws the most-utilized \rvicreg\ factor
($\kappa_{32}=-0.988$) with the angle from the top two tangent principal
directions ($67\%$ of the variance) and the radius equal to the exact geodesic
distance from the origin. The same rings now crowd toward the boundary circle.
The crowding is the exponential volume of Section~\ref{sec:theory} at work:
$71\%$ of the points lie beyond geodesic radius $2$, and at the median occupied
radius the factor's geodesic sphere carries $11.4$ times the flat-metric area.
Panel~(c) lifts the same points onto the constant-curvature hyperboloid above
their Poincar\'e-ball shadow.

\section{Algorithm, Hyperparameters, and Reproducibility}
\label{app:hyperparams}

\begin{algorithm2e}[h]
\caption{\rvicreg\ masked-view loss $\mathcal L_{\mathrm{MMM}}$ (one minibatch)}
\label{alg:rvicreg}
\DontPrintSemicolon
construct views $a_1(x)$, $a_2(x)$; encode, fuse, and project to
$z_1,z_2\in\mathbb R^{D}$\;
reshape $z_i$ into factors $z_i^{(f)}\in\mathbb R^{d}$, $f=1,\dots,F$\;
map $x_i^{(f)}\leftarrow\exp_0^{\kappa_f}(z_i^{(f)})$ and compute $\mathcal
L_{\mathrm{inv}}$ by Equation~\eqref{eq:rvicreg-inv}\;
\For{$f=1$ \KwTo $F$}{
  $\mu_f\leftarrow\FM\big(\mathrm{cat}(x_1^{(f)},x_2^{(f)})\big)$ (three
origin-initialized Karcher iterations, detached)\;
  $t_i^{(f)}\leftarrow\log_{\mu_f}^{\kappa_f}(x_i^{(f)})$ for $i\in\{1,2\}$\;
}
concatenate $t_i\leftarrow[t_i^{(1)};\dots;t_i^{(F)}]$; compute $\mathcal
L_{\mathrm{var}},\mathcal L_{\mathrm{cov}}$ by
Equation~\eqref{eq:rvicreg-varcov}\;
\Return $\lambda_{\mathrm{inv}}\mathcal
L_{\mathrm{inv}}+\lambda_{\mathrm{var}}\mathcal
L_{\mathrm{var}}+\lambda_{\mathrm{cov}}\mathcal L_{\mathrm{cov}}$\;
\end{algorithm2e}

\begin{table}[h]
\centering
\caption{Hyperparameters shared by all experiments.}
\label{tab:hyperparams}
\begin{tabular}{ll}
\toprule
Hyperparameter & Value \\
\midrule
Pretrain / finetune epochs & $50$ / $50$ \\
Batch size & $64$ \\
Optimizer / learning rate / weight decay & AdamW / $10^{-4}$ / $0.02$ \\
Warmup epochs / gradient clip & $5$ / $1.0$ \\
Text / image / tabular reconstruction weights & $(1.0,\,1.0,\,0.01)$ \\
Alignment weight $\lambda_{\mathrm{MMM}}$ & $1.0$ \\
$(\lambda_{\mathrm{inv}},\lambda_{\mathrm{var}},\lambda_{\mathrm{cov}})$,
$\gamma$ & $(25,\,25,\,1)$, $1.0$ \\
Projection dimension $D$ & $256$ \\
\rvicreg\ factors $F\times d$ & $64\times4$ \\
Curvature bound and initialization & $\kappa_{\max}=1.5$, $\kappa_0=-1$ \\
Fr\'echet mean steps (detached) & $3$ \\
Seeds per dataset and method & $5$ \\
\bottomrule
\end{tabular}
\end{table}

All datasets are public (MuG-Bench and PetFinder); MuG-Bench uses the provided
splits and PetFinder an $80/5/15$ split. Image and text encoders are initialized
from pretrained checkpoints and finetuned during multimodal pretraining;
\vicreg\ and \rvicreg\ share a VICReg-style projector with BatchNorm, while
\simsiam\ uses its own projector and predictor. The downstream classifier
operates on the fusion CLS embedding without the SSL projector. The full sweep
(nine datasets, seven modality subsets, three methods, five seeds) comprises
$945$ completed runs whose recorded run times sum to approximately $200$
GPU-hours, executed on two NVIDIA RTX 4090 GPUs. The code will be
open-sourced.

\end{document}